\newcommand{\E}[2]{\mathbb{E}_{#1}\left[#2\right]}
\newcommand{\abs}[1]{
\left\vert
#1
\right\vert
}
\newcommand{\norm}[1]{
\left\Vert
#1
\right\Vert
}
\newtheorem{prop}{Proposition}
\newtheorem{remark}{Remark}
\newtheoremstyle{break}
  {\topsep}{\topsep}%
  {\itshape}{}%
  {\bfseries}{}%
  {\newline}{}%
\theoremstyle{definition}
\newtheorem{definition}{Definition}
\title{
Neural Information field filter 
}
\newcommand{\qref}[1]{Eq.~(\ref{#1})}
\author{{\includegraphics[scale=0.06]{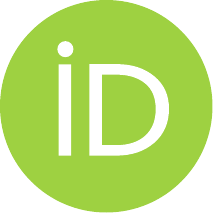}\hspace{1mm}Kairui~Hao\thanks{Corresponding author.}} \\
	School of Mechanical Engineering\\
	Purdue University\\
	West Lafayette, IN \\
	\texttt{hao55@purdue.edu} \\
	\And
	{\includegraphics[scale=0.06]{orcid.pdf}\hspace{1mm}Ilias~Bilionis} \\ 
	School of Mechanical Engineering\\
	Purdue University\\
	West Lafayette, IN \\
	\texttt{ibilion@purdue.edu} \\
}
\begin{document}
\maketitle

\begin{abstract}
    We introduce neural information field filter, a hierarchical Bayesian state and parameter estimation method for high-dimensional nonlinear dynamical systems given large measurement datasets.
    Traditional methods, such as Kalman and particle filters, are often computationally expensive for such applications. 
    Information field theory offers a Bayesian framework that efficiently reconstructs dynamical model state paths and calibrates model parameters from noisy data.
    To apply the method, we begin by parameterizing the time evolution state path using the span of a finite linear basis.
    This linear basis function should be reparameterized by the initial state to enforce an initial condition.
    Next, we define a physics-informed conditional prior for the state path parameterization given the initial state and model parameters.
    With a likelihood function connecting the unknown quantities to an experiment dataset,  we update the posterior distribution of the state path parameterization and model parameters.
    Designing an expressive yet simple linear basis is essential for inference accuracy, but challenging.
    Moreover, reparameterizing the state path using the initial state is straightforward for a linear basis, but nontrivial for more complex and expressive function parameterizations, such as neural networks.
    The objective of this paper is to simplify and enrich the class of state path parameterizations using neural networks for the information field theory approach to Bayesian state and parameter estimation in dynamical systems.
    To this end, we propose a generalized physics-informed conditional prior using an auxiliary initial state. 
    We show the existing reparameterization is a special case.
    We parameterize the state path using a residual neural network that consists of a linear basis function and a Fourier encoding fully connected neural network residual function.
    The residual function aims to correct the error of the linear basis function.
    To sample from the intractable posterior distribution, we develop an optimization algorithm, nested stochastic variational inference, and a sampling algorithm, nested preconditioned stochastic gradient Langevin dynamics.
    A series of numerical and experimental examples verify and validate the proposed method.
\end{abstract}

\keywords{
Information field theory 
\and
Physics-informed neural networks
\and
Dynamical system
\and
Bayesian state and parameter estimation
\and
Uncertainty quantification
}

\section{Introduction}

Bayesian probabilistic state reconstruction and model parameter estimation for dynamical systems governed by ordinary differential equations are ubiquitous in mathematical and engineering problems.
Such problems include
nonlinear energy sink device~\citep{lund2020identification},
structural dynamics~\citep{chatterjee2023sparse,nayek2023identification},
structural damage identification~\citep{li2023robust},
polymer
composites~\citep{thomas2022bayesian},
magnet synchronous machines~\citep{beltran2020uncertainty},
wind turbines~\citep{song2018wind},
building structures~\citep{kosikova2023bayesian},
fault detection and diagnosis~\citep{murali2024habsim},
building energy~\citep{yi2021model},
particle tracking velocimetry~\citep{hao2023unbalanced, hao2024unbalanced, 10.1088/1361-6501/ad6624}
,
and
heating, ventilation, and air conditioning systems~\citep{hao2020comparing,hao2022comparing}. 
The goal is to use noisy measurement data to estimate unknown time evolution states and model parameters,
incorporating uncertainty quantification and propagation capabilities.
Key steps include specifying a prior distribution for the states and parameters, evaluating the likelihood given measurement data by running forward models, and computing the posterior distribution of the states and parameters.

Standard approaches, such as Kalman filters~\citep{kalman1960new, wan2000unscented, evensen2003ensemble, anderson2012optimal}, particle filters~\citep{liu1998sequential, doucet2001sequential, doucet2000sequential}, and variational filters~\citep{lund2021variational}, can estimate the posterior distribution of dynamical model states from noisy measurements when model parameters are given.
To jointly estimate model states and parameters, we apply methods such as the dual Kalman filter~\citep{wan1996dual, wan2001dual} and nested particle fitlers~\citep{chopin2013smc2, crisan2018nested}.
\citet{kantas2015particle} provides a comprehensive review of particle filters for joint parameter and state estimation in state-space models.
The authors discussed the complexity of using particle filters to estimate model parameters.
The naive approach, i.e., the augmented state method, treats model parameters as additional state variables and applies standard state filtering techniques, such as sequential Monte Carlo. 
However, this approach has been shown to be problematic~\citep{kitagawa1998self}, as it inadequately explores the parameter space.
A more theoretically sound approach considers the hierarchical structure from model parameters to hidden states, and from the hidden states to observations.
Numerical methods for this include maximum likelihood~\citep{hurzeler2001approximating, malik2011particle, dejong2013efficient, klaas2006fast} and Bayesian approaches~\citep{fearnhead2010sequential, andrieu2010particle, chopin2002sequential, fulop2013efficient, liu2001combined, flury2011bayesian}.
When likelihood function is intractable, one can apply likelihood-free methods, such as approximate Bayesian computation~\citep{rubin1984bayesianly, stigler2010darwin, sisson2018handbook}, and Bayesian synthetic likelihood~\citep{price2018bayesian, an2020robust}, to approximate the likelihood using simulated data and a statistical distance, e.g., Wasserstein distance~\cite{kantorovich1960mathematical} and maximum mean discrepancy~\cite{smola2007hilbert}.
Likelihood-free methods have also been successfully integrated into particle filters~\citep{frigola2013bayesian, sisson2007sequential}.
Despite the great success of these standard methods in the past,  they do not leverage modern deep learning software, such as PyTorch~\cite{paszke2019pytorch} and JAX~\citep{jax2018github}, which utilize automatic differentiation~\citep{paszke2017automatic}.
Applying standard methods requires discretizing ordinary differential equations and repeatedly running ODE solvers to evaluate the predictive distribution and likelihood function.
Consequently, the standard approaches do not scale well with the dimensions of model states and the size of measurement data.

Information field theory (IFT) \citep{ensslin2009information, ensslin2013information, ensslin2019information,alberts2023physics, hao2024physics}
is a Bayesian approach to reconstruct infinite-dimensional physical fields, such as pressure and velocity fields.
IFT applies statistical field theory to encode prior knowledge about the field, such as space-time homogeneity, temporal causality, and locality \citep{frank2021field, westerkamp2021dynamical}.
The likelihood function is then constructed using a measurement response function that maps the physical fields to the measurement data.
In general, computing the posterior distribution of the physical fields is intractable, except in special cases, such as a Gaussian random field with a linear measurement response function \citep{lancaster2014quantum}.
Therefore, numerical approaches, such as metric Gaussian variational inference \citep{knollmuller2019metric}, are required to approximate the posterior distribution.

\citet{hao2024information} introduced an information field theory approach to dynamical system state reconstruction and model parameter estimation, which leverages JAX software to accelerate numerical computation.
In this method, the authors parameterize the time evolution state path using a finite number of linear bases.
They then define a prior distribution for the state path parameterization and dynamical model parameters.
This prior has a physics-informed conditional prior for the state path parameterization, given the initial state and model parameters.
IFT constructs this prior using the path integral technique \cite{feynman2010quantum, zinn2021quantum}, which is similar to energy-based models \citep{grenander1994representations}.
This conditional prior introduces a hierarchical structure from the initial state and model parameters to the state path function, a key feature in state space model inference.
Second, a likelihood function relates the state path function and model parameters to a measurement dataset.
Finally, by applying the Bayes' rule, we derive the posterior distribution of the parameterized state path function and model parameters.
In general, the posterior is analytically intractable, requiring numerical methods for approximation.
\citet{alberts2023physics} and \citet{hao2024information} developed 
sampling and optimization approaches using stochastic gradient Langevin dynamics \citep{welling2011bayesian} and
stochastic variational inference \citep{hoffman2013stochastic}, respectively.
The proposed method is computationally efficient and scalable for the following reasons.
First, similar to physics-informed neural networks~\citep{raissi2019physics}, IFT randomly samples a collection of time points to evaluate the physics-informed conditional prior, which is efficiently implemented using \textit{vmap} and \textit{jit} in JAX.
The number of sampling time points is typically much less than the discretized time grid required for recursive ODE solvers.
Second, IFT subsamples a minibatch dataset from a large measurement dataset, enhancing its scalability.  
Despite the promising applications of IFT, we notice that \citet{hao2024information} reparameterizes the state path using the initial state.
This reparameterization trick is straightforward for simple function parameterizations, such as the span of a finite linear basis.
However, it becomes non-trivial for more complex and expressive parameterizations, such as neural networks.
Consequently, the need for reparameterization complicates and limits the choice of state path function representation.

The objective of this paper is to simplify and enrich the class of state path parameterizations using neural networks for the information field theory approach to Bayesian state and parameter estimation in dynamical systems.
We call the method neural information field filter (NIFF).
We introduce a generalized physics-informed conditional prior that does not require reparameterizing the state path with its initial state.
We achieve this using an auxiliary initial state, which is not necessarily the same as the initial state of the state path function.
We use a kernel Hamiltonian to measure the similarity between the auxiliary initial state and the initial state of the parameterized state path function.
We show that the reparameterization trick using the initial state is a special case of the generalized physics-informed conditional prior when the kernel is the Dirac function.
Specifically,  we define a relaxed physics-informed conditional prior by choosing a Gaussian kernel.
We parameterize the state path function using residual neural networks~\citep{he2016deep} that consist of a linear basis function and a residual function.
The linear basis function follows the specification in \citep{hao2024information}, and the residual function is a fully connected neural network with a Fourier encoding first-layer~\citep{tancik2020fourier, hennigh2021nvidia}.
The linear basis function inherits all the advantages of \citep{hao2024information}, such as simplicity in its mathematical form.
However, designing a linear basis to achieve an acceptable accuracy is challenging, as we do not know all the properties, such as the regularity, of the unknown state path functions.
The residual function complements the linear basis function by correcting remaining errors and fine-tuning the estimated state path function. 
To numerically approximate the intractable posterior distribution of the state path parameterization and model parameters, we develop an optimization algorithm, nested stochastic variational inference, and a sampling algorithm, nested preconditioned stochastic gradient Langevin dynamics.
Both methods apply Monte Carlo sampling techniques to efficiently update the unknown quantities.
Finally, we verify and validate NIFF through a series of numerical and experimental examples.

The structure of this paper is as follows.
In section~\ref{section:background}, we review the information field theory approach to Bayesian state and parameter estimation in dynamical systems developed in~\citep{hao2024information}.
In section~\ref{section: methodology}, we theoretically develop NIFF.
In section~\ref{sec:numerical algs}, we develop an optimization algorithm, nested stochastic variational inference, and a sampling algorithm, nested preconditioned stochastic gradient Langevin dynamics.
In section~\ref{section: example}, we verify and validate NIFF through a series of numerical and experimental examples.
Finally, section~\ref{sec: conclusions} concludes the paper.

\section{Background on information field theory approach to Bayesian state and parameter estimation in dynamical systems}
\label{section:background}

We review our previous work on the information field theory approach to dynamical system state reconstruction and parameter estimation~\citep{hao2024information}.
In the following mathematical exposition, we adjust our original notation to emphasize the reparameterization step.

We consider the dynamical system governed by the ODE
\begin{align}
    \dot{x}(t) &= f(x(t), t;\theta),\label{ode}
    \\
    Y(t_k) &= R(x(t_k); \theta) + \text{noise},\nonumber
\end{align}
where 
$x(t)\in \mathbb{R}^{d_x}$ is the state vector, 
$f$ is the vector field,
$\theta\in \mathbb{R}^{d_{\theta}}$ are the model parameters, 
$R$ is the measurement response function, and
$Y(t_k)\in \mathbb{R}^{d_y}$ is the random output vector.
The noise is typically independent, identically distributed, zero-mean Gaussian.
The objective is to estimate the model parameters $\theta$ and time evolution state path $x(t)$ from noisy measurement data $y=(y(t_1), \cdots, y(t_{n_d}))$ at $n_d$ sampling time points.

\citet{hao2024information} parameterizes the state path function $x(t)=\hat{x}(t;w)$ using a $K+1$ linear basis 
$\psi(t)
=\left[
        \psi_0(t), 
        \cdots,
        \psi_{K}(t)
\right]^T$ 
with the coefficient matrix 
$W=\left[w_0, \cdots, w_K\right]\in\mathbb{R}^{d_x\times (K+1)}$, where each $w_i$ is a $d_x$-dimensional column vector.
Notice that we vectorize this matrix into $w=\operatorname{vec}(W)$,
since it is more natural to define a probability distribution for a random vector $w$ than a random matrix $W$.
Then, the parameterized path is
$
    \hat{x}(t; w) = \sum_{i} w_i\psi_i(t).
$
We use this parameterized state path to define the physics-informed prior information Hamiltonian:
\begin{align*}
    H(w, \theta) 
    = 
    \int_0^T dt\ \lVert
     \dot{\hat{x}}(t;w)-f(\hat{x}(t;w), t;\theta) \lVert^2.
\end{align*}

Since IFT requires evaluating and sampling from the physics-informed conditional prior, which is conditioned on the initial state,
we reparameterize $\hat{x}(t; w)$ explicitly using the initial state $x_0\in \mathbb{R}^{d_x}$ (refer to section 2.2 in \citep{hao2024information}).
We achieve this by solving the system of equations:
\begin{align*}
    \sum_{i=0}^{K}
    w_i \psi_i(0) = x_0.
\end{align*}
We select $K$ free bases from $\psi(t)$ and the remaining one basis $\psi_i(t)$ to be dependent.
The dependent coefficient vector is
\begin{align}
    w_i = \frac{
    x_0-\sum_{j\neq i}w_j\psi_j(0)
    }{\psi_i(0)}.
    \label{wj}
\end{align}
Using \qref{wj}, we can compute the dependent coefficient vector $w_i$ given an initial state $x_0$ and free coefficients $w_{j\neq i}$ such that the state path function satisfies the initial condition.
We use the notation $w_{-i}$ to denote the remaining free coefficient vector, i.e., 
$w_{-i}=\operatorname{vec}(\left[w_0, \cdots, w_{i-1}, w_{i+1}, \cdots, w_K\right])$, and concisely denote the function in \qref{wj} by $w_i = \mathcal{T}(x_0; w_{-i})$.
For simplicity of notation,  and without ambiguity, we denote the coefficient vector $w=(w_{-i}, w_i)$ using the free coefficient vector $w_{-i}$ and the dependent coefficient vector $w_i$.
This allows us to express the state path function $\hat{x}(t; w)$ by $\hat{x}(t; w_{-i}, w_i)$.
Finally, we define the reparameterized state path:
\begin{align*}
    \Tilde{x}(t; w_{-i}, x_0)
    &=
    \hat{x}(t; w_{-i}, 
    \mathcal{T}(x_0; w_{-i}))
    \\
    &=
   \mathcal{T}(x_0; w_{-i})
   \psi_i(t)
    +
    \sum_{j\neq i} w_j\psi_j(t).
\end{align*}
It is straightforward to verify that $\Tilde{x}$ automatically satisfies the initial condition, i.e., $\Tilde{x}(0;w_{-i}, x_0) = x_0$, due to \qref{wj}.
The reparameterized information Hamiltonian is
\begin{align*}
    \Tilde{H}(w_{-i}, x_0, \theta)
    &=
    \int_0^T  \lVert
     \dot{\Tilde{x}}(t;w_{-i}, x_0)-f(\Tilde{x}(t; w_{-i}, x_0), t;\theta) \lVert^2
     \ dt
     \\
     &=
     \int_0^T \left\Vert
     \dot{\hat{x}}(t; w_{-i}, \mathcal{T}(x_0; w_{-i}))
     -
     f\left(
     \hat{x}(t; w_{-i}, \mathcal{T}(x_0;w_{-i})), t; \theta
     \right)
     \right\Vert^2 \ dt
     \\
     &=
    H(w_{-i}, \mathcal{T}(x_0; w_{-i}), \theta).
\end{align*}

The objective of IFT is to find the posterior distribution:
\begin{align}
    p(w_{-i}, x_0, \theta\vert y)
    =
    \frac{
    p(y\vert w_{-i}, x_0, \theta)
    \Tilde{p}(w_{-i}\vert x_0, \theta)
    p(x_0, \theta)
    }{
    p(y)
    },
    \label{ift_post}
\end{align}
where the physics-informed conditional prior is: 
\begin{align}
    \Tilde{p}(w_{-i}\vert x_0, \theta)
    =
    \frac{
    e^{-\beta \Tilde{H}(w_{-i}, x_0, \theta)}
    }{
    Z(x_0, \theta)
    }.\label{hao24prior}
\end{align}
In the above definition, 
$\beta$ is a hyperparameter that controls our level of trust in the model.
A greater $\beta$ enforces the physics harder, and the normalization constant
\begin{align*}
    Z(x_0, \theta)
    =
    \int
    e^{-\beta \Tilde{H}(w_{-i}, x_0, \theta)}
    \
    dw_{-i}
\end{align*}
is the partition function.
The essence of IFT is encoding well-known physics through the physics-informed conditional prior.

The reparameterization step is essential because, given the initial state $x_0$ and model parameters $\theta$, the ODE defined in Eq.~(\ref{ode}) has a unique solution under mild conditions~\citep{meiss2007differential}.
This uniqueness makes it feasible to sample from the physics-informed condition prior $\Tilde{p}(w_{-i}\vert x_0, \theta)$.
However, reparameterizing the state path function requires solving the system of equations in Eq.~(\ref{wj}).
This process is straightforward when $\hat{x}(t; w)$ is defined by a finite linear basis, but becomes complex for more expressive function parameterizations, such as neural networks.
In the following, we extend IFT to eliminate the need for reparameterization, enabling us to use neural networks to enrich the state path parameterization.

\section{Theoretical development}
\label{section: methodology}

\subsection{Generalized physics-informed conditional prior}
We parameterize the state path using $\hat{x}(t;w)$ without reparameterization.
First, we make the following definition.
\begin{definition}[Generalized physics-informed conditional prior]
    Let $x_0$ be the auxiliary initial state variable, which is not necessarily equal to $\hat{x}(0;w)$.
    The generalized physics-informed conditional prior is
    \begin{align*}
        p(w\vert x_0, \theta)
        =
        \frac{
        e^{
        -\beta H(w,  \theta)
        -H(\hat{x}(0;w), x_0)
        }
        }{
        Z(x_0, \theta)
        },
\end{align*}
where the kernel Hamiltonian is
\begin{align*}
    H(\hat{x}(0;w), x_0)
    =
    -\log K(\hat{x}(0;w), x_0),
\end{align*}
and the normalization constant is
\begin{align*}
    Z(x_0, \theta)
    =
    \int 
    e^{
        -\beta H(w,  \theta)
        -H(\hat{x}(0;w), x_0)
        }\
         dw.
\end{align*}
\end{definition}
If the kernel Hamiltonian is not properly defined, e.g., when $K(\cdot, \cdot)$ is the Dirac function, we simply write 
$p(w\vert x_0, \theta)=K(\hat{x}(0;w), x_0)\frac{e^{-\beta H(w, \theta)}}{Z(x_0, \theta)}$.

To obtain the conditional prior $p(w\vert \theta)$, we marginalize out $x_0$ using a prior distribution for the auxiliary initial state:
\begin{align*}
    p(w\vert \theta)
    =
    \int 
    p(w\vert x_0, \theta)
    p(x_0)\ dx_0.
\end{align*}
This step essentially performs a convolution with the kernel
$
K(
\hat{x}(0;w),x_0
)
$.

The prior defined in Eq.~(\ref{hao24prior}) is a special case of the generalized physics-informed conditional prior when the Kernel $K(\hat{x}(0;w), x_0)$ is the Dirac function.
\begin{prop}
    \label{prop1}
    Choose any $i$ such that $\psi_i(0)\ne 0$.
    Denote the marginal distribution of the generalized physics-informed conditional prior $p(w_{-i}\vert x_0, \theta)=\int p(w\vert x_0, \theta)\ dw_i$, and the joint distribution of the reparameterized prior  $\Tilde{p}(w\vert\theta)$.
    Define the function $w_i=\mathcal{T}(x_0; w_{-i})$ such that 
    $
    \hat{x}(0; w_{-i}, \mathcal{T}(x_0; w_{-i}))=x_0
    $, and assume $\mathcal{T}$ is bijective between $x_0$ and $w_i$.
    If the kernel is
    $
    K\left(\hat{x}(0;w),x_0\right)
    =
    \delta(
    w_i-\mathcal{T}(x_0;w_{-i})
    )
    $, then $p(w_{-i}\vert x_0, \theta)=\Tilde{p}(w_{-i}\vert x_0, \theta)$
    and 
    $
    p(w\vert \theta) = \Tilde{p}(w\vert \theta)
    $.
    Thus, we recover the reparameterization approach described in \citep{hao2024information}.
\end{prop}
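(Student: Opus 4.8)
The plan is to establish the two claimed identities separately, in each case substituting the Dirac kernel $K(\hat{x}(0;w),x_0)=\delta(w_i-\mathcal{T}(x_0;w_{-i}))$ and carrying out the integral that defines the relevant marginal. Throughout I write $w=(w_{-i},w_i)$ and exploit the defining property $\hat{x}(0;w_{-i},\mathcal{T}(x_0;w_{-i}))=x_0$ together with the assumed bijectivity of $\mathcal{T}(\cdot;w_{-i})$: the latter says that, for fixed $w_{-i}$, the unique $x_0$ solving $\mathcal{T}(x_0;w_{-i})=w_i$ is $x_0=\hat{x}(0;w)$. A preliminary step that both identities use is to reconcile the two partition functions: substituting the Dirac kernel into $Z(x_0,\theta)$ and integrating out $w_i'$ collapses $H(w',\theta)$ to $H(w_{-i}',\mathcal{T}(x_0;w_{-i}'),\theta)=\tilde{H}(w_{-i}',x_0,\theta)$, so $Z(x_0,\theta)=\int e^{-\beta\tilde{H}(w_{-i}',x_0,\theta)}\,dw_{-i}'$, which is exactly the partition function of the reparameterized prior in \qref{hao24prior}.

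For the marginal identity $p(w_{-i}\vert x_0,\theta)=\tilde{p}(w_{-i}\vert x_0,\theta)$, I would integrate $p(w\vert x_0,\theta)=\delta(w_i-\mathcal{T}(x_0;w_{-i}))\,e^{-\beta H(w,\theta)}/Z(x_0,\theta)$ over $w_i$. The delta pins $w_i=\mathcal{T}(x_0;w_{-i})$, sending $H(w,\theta)\mapsto\tilde{H}(w_{-i},x_0,\theta)$ and leaving $e^{-\beta\tilde H}/Z$, which is precisely $\tilde{p}(w_{-i}\vert x_0,\theta)$ by the partition-function identification above. This step is routine.

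The second identity $p(w\vert\theta)=\tilde{p}(w\vert\theta)$ is where the real work lies, and I expect the Jacobian bookkeeping to be the main obstacle. Here $p(w\vert\theta)=\int\delta(w_i-\mathcal{T}(x_0;w_{-i}))\,e^{-\beta H(w,\theta)}p(x_0)/Z(x_0,\theta)\,dx_0$, but now the delta must be read as a function of the integration variable $x_0$ rather than of $w_i$. Applying the multivariate change-of-variables rule for the Dirac delta gives $\delta(w_i-\mathcal{T}(x_0;w_{-i}))=\delta(x_0-\hat{x}(0;w))/|\det\partial_{x_0}\mathcal{T}|$, evaluated at the unique root $x_0=\hat{x}(0;w)$, so the integral collapses to $|\det\partial_{x_0}\mathcal{T}|^{-1}e^{-\beta H(w,\theta)}p(\hat{x}(0;w))/Z(\hat{x}(0;w),\theta)$. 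On the other side, $\tilde{p}(w_{-i}\vert x_0,\theta)p(x_0)$ is naturally a density in the variables $(w_{-i},x_0)$, so re-expressing it as a density in $w=(w_{-i},w_i)$ forces the change of variables $x_0\mapsto w_i=\mathcal{T}(x_0;w_{-i})$, which introduces a factor $|\det\partial_{x_0}\mathcal{T}|^{-1}$ evaluated at $x_0=\hat{x}(0;w)$.

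The crux is that these two Jacobian factors are identical, so they cancel against nothing is lost; after substituting $\tilde{H}(w_{-i},\hat{x}(0;w),\theta)=H(w,\theta)$ and the partition-function identity, the two expressions coincide, recovering the approach of \citep{hao2024information}. I would take care to state explicitly that both sides are compared as densities on the \emph{same} space $w=(w_{-i},w_i)$, since silently treating $\tilde{p}$ as a density in $(w_{-i},x_0)$ is the easiest way to drop the cancelling Jacobian and so is the step most prone to error.
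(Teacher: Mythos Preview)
Your proposal is correct and follows essentially the same route as the paper's proof in Appendix~\ref{appendix:proof1}: first collapse the partition function via the $w_i'$-integration, then marginalize out $w_i$ for the first identity, and for the second identity invoke the Dirac composition rule to rewrite $\delta(w_i-\mathcal{T}(x_0;w_{-i}))$ in terms of $\delta(x_0-\hat{x}(0;w))$ with the attendant Jacobian, while on the $\tilde p$ side perform the change of variables $(w_{-i},x_0)\mapsto(w_{-i},w_i)$ to produce the matching Jacobian. Your explicit warning about comparing both sides as densities on the same $w$-space is exactly the bookkeeping point the paper handles via the map $\mathcal{G}:(w_{-i},x_0)\mapsto(w_{-i},\mathcal{T}(x_0;w_{-i}))$.
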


Proof is in Appendix~\ref{appendix:proof1}.

\subsection{Relaxed physics-informed conditional prior}

We choose the special Kernel
\begin{align*}
    K(\hat{x}(0;w), x_0)= e^{-\beta_{2}\norm{\hat{x}(0;w)- x_0}^2},
\end{align*}
and use 
$
H(\hat{x}(0;w), x_0)
=
\norm{\hat{x}(0;w)- x_0}^2
$
to denote the kernel  Hamiltonian.
We then define the relaxed physics-informed conditional prior:
\begin{align}
\label{relaxed_prior}
    p(w\vert x_0, \theta)
    =
    \dfrac{
    e^{
    -\beta_1 H(w, \theta)
    -\beta_2 H(\hat{x}(0;w), x_0)
    }
    }{
    Z(x_0, \theta)
    }.
\end{align}

The normalization constant is
\begin{align}
\label{partition}
    Z(x_0, \theta)
    =
    \int 
    e^{
        -\beta_1 H(w, \theta)
        -\beta_2 H(\hat{x}(0;w), x_0)
    }\ dw.
\end{align}

\begin{remark}
    The relaxed physics-informed conditional prior has a probabilistic interpretation of the proximal operator~\citep{parikh2014proximal}.
    It assigns higher probability when $H(w, \theta)$ is smaller and the initial state of the state path, $\hat{x}(0;w)$, is closer to the auxiliary initial state $x_0$.
\end{remark}

By applying Bayes' rule and integrating out the auxiliary initial state $x_0$, the posterior distribution is given by
\begin{align}
    p(w, \theta\vert y)
    &=
    \int p(w, \theta, x_0\vert y)
    \
    dx_0
    \nonumber
    \\
    &=
    \int
    \frac{
    p(y\vert w, \theta)p(w\vert x_0, \theta)p(\theta)p(x_0)
    }{
    p(y)
    }
    \ dx_0.\label{marg_posterior}
\end{align}

We plot three directed acyclic graphs in Fig.~\ref{fig:dag} to visualize the hierarchically structural differences of Bayesian PINNs, IFT with the reparameterized state path approach described in~\citep{hao2024information}, and our proposed method, i.e., NIFF with the relaxed physics-informed conditional prior approach.
We use circles to denote nodes that are randomly generated from their parent nodes, and squares to denote nodes that are deterministically generated from their parent nodes.
The additional shaded node $y_f$ (\citet{yang2021b} denote it by $\mathcal{D}_f$) in Bayesian PINNs is the fictitious observation for enforcing physics.
\begin{remark}
    In our relaxed method, the auxiliary initial state $x_0$ does not have a direct link to the parameterized state path function.
\end{remark}

\begin{figure}[hbt]
    \centering
    \begin{tikzpicture}[style={thick}]
        \node [circle,draw=black,fill=white,inner sep=0pt,minimum size=0.8cm] (x0) at (0,1.5) { $x_0$};
        \node [circle,draw=black,fill=white,inner sep=0pt,minimum size=0.8cm] (theta) at (4.5, 1.5) { $\theta$};
        \node [circle,draw=black,fill=white,inner sep=0pt,minimum size=0.8cm] (w) at (1.5, 0) {$w_{-i}$};
        \node [rectangle,draw=black,fill=white,inner sep=0pt,minimum size=0.8cm] (hatx) at (1.5, -1.5) {$\hat{x}(t;w_{-i}, \mathcal{T}(x_0; w_{-i}))$};
        \node [circle,draw=black,fill=lightgray,inner sep=0pt,minimum size=0.8cm] (y) at (1.5, -3) {$y$};
        \path [draw, ->] (x0) edge (w);
        \path [draw, ->] (theta) edge (w);
        \path [draw, ->] (x0) edge (hatx);
        \path [draw, ->] (w) edge (hatx);
        \path [draw, ->] (hatx) edge (y);
        \path [draw, ->] (theta)[bend left]  edge (y);
        \node [circle,draw=black,fill=white,inner sep=0pt,minimum size=0.8cm] (x0new) at (6,1.5) { $x_0$};
        \node [circle,draw=black,fill=white,inner sep=0pt,minimum size=0.8cm] (thetanew) at (9, 1.5) { $\theta$};
        \node [circle,draw=black,fill=white,inner sep=0pt,minimum size=0.8cm] (wnew) at (7.5, 0) {$w$};
        \node [rectangle,draw=black,fill=white,inner sep=0pt,minimum size=0.8cm] (hatxnew) at (7.5, -1.5) {$\hat{x}(t;w)$};
        \node [circle,draw=black,fill=lightgray,inner sep=0pt,minimum size=0.8cm] (ynew) at (7.5, -3) {$y$};
        \path [draw, ->] (x0new) edge (wnew);
        \path [draw, ->] (thetanew) edge (wnew);
        \path [draw, ->] (wnew) edge (hatxnew);
        \path [draw, ->] (hatxnew) edge (ynew);
        \path [draw, ->] (thetanew)[bend left]  edge (ynew);
        \node [circle,draw=black,fill=white,inner sep=0pt,minimum size=0.8cm] (thetab) at (-2,0) { $\theta$};
        \node [circle,draw=black,fill=white,inner sep=0pt,minimum size=0.8cm] (wb) at (-4,0) { $w$};
        \node [rectangle,draw=black,fill=white,inner sep=0pt,minimum size=0.8cm] (hatxbnn) at (-4, -1.5) {$\hat{x}(t;w)$};
        \node [circle,draw=black,fill=lightgray,,inner sep=0pt,minimum size=0.8cm] (yb) at (-4,-3) { $y$};
        \node [circle,draw=black,fill=lightgray,,inner sep=0pt,minimum size=0.8cm] (yrb) at (-2,-3) { $y_f$};
        \path [draw, ->] (thetab) edge (yb);
        \path [draw, ->] (wb) edge (hatxbnn);
         \path [draw, ->] (hatxbnn) edge (yb);
        \path [draw, ->] (thetab) edge (yrb);
        \path [draw, ->] (hatxbnn) edge (yrb);
    \end{tikzpicture}
    \caption{
    Directed acyclic graphs for Bayesian PINNs (left), IFT with the reparameterized state path approach~\citep{hao2024information}) (middle), and NIFF with the relaxed physics-informed conditional prior approach (right).
    }
    \label{fig:dag}
\end{figure}

\begin{remark}
    By introducing the partition function, IFT and NIFF incorporate a hierarchical structure from the initial state or auxiliary initial state,  and model parameters to the parameterized state path function, which is not considered in Bayesian PINNs~\cite{yang2021b}.
\end{remark}
This hierarchical structure is generally preferred in joint state and parameter estimation for state space models~\cite{kantas2015particle}.
For example, in standard batch filtering and parameter calibration algorithms, we formulate the Bayesian inverse problem:
$p(x_{0:n_d}, \theta\vert y_{0:n_d})\propto p(y_{0:n_d}\vert x_{0:n_d}, \theta)p(x_{1:n_d}\vert x_0, \theta)p(x_0, \theta)$.
Comparing this formulation to IFT's formulation in Eq.~\ref{ift_post} and NIFF's formulation in Eq.~\ref{marg_posterior}, we observe that IFT and NIFF replace the discrete state variables $x_{0:d_n}$ with continuous state path functions and use the physics-informed conditional priors to maintain the hierarchical structure.
In contrast, the neural network parameter prior in Bayesian PINNs is usually a simple diagonal multivariate normal distribution.
Please refer to section 2.3 in \citep{hao2024information} for a detailed mathematical comparison.

\subsubsection{Fourier encoding residual neural networks}

We parameterize the state path $x(t) = \hat{x}(t; w)$ using neural networks.
Due to the stochastic relaxation, reparameterization with the initial state $x_0$ is not required.
We adopt a hybrid parameterization that combines a linear basis
$\sum_{i} w_i^b\psi_i(t)$ and 
a neural network, where the neural network is designed to learn the residual state path.
For the neural network component, we first pass the time variable through a Fourier feature encoding layer~\citep{tancik2020fourier, hennigh2021nvidia}, which helps address the spectral bias issue in neural networks \citep{rahaman2019spectral, wang2022and}.
Specifically, we encode the time $t$ to a $2K+1$ high dimensional Fourier space with a $\Bar{T}$ time period as follows:
\begin{align*}
    \left(
    1, 
    \sin{\frac{2\pi t}{\Bar{T}}},
    \cos{\frac{2\pi t}{\Bar{T}}},
    \cdots,
    \sin{\frac{2\pi K t}{\Bar{T}}},
    \cos{\frac{2\pi K t}{\Bar{T}}}
    \right)
    =
    T_{\text{encoder}}(t).
\end{align*}
Let $T_i(z; w^{\text{NN}}_i, b_i^{\text{NN}})=\sigma_i(w_i^{\text{NN}} z + b_i^{\text{NN}})$ denote a neural network hidden layer, where $w_i^{\text{NN}}$ is the weight matrix, $b_i^{\text{NN}}$ is the bias vector, and $\sigma_i$ is the nonlinear activation function.
The hybrid parameterization consists of two parallel predictive paths using the linear basis and neural network functions.
We collectively write $w=(w^{\text{NN}}_{1:n_{\text{hidden}}}, b^{\text{NN}}_{1:n_{\text{hidden}}}, w^{\text{NN}}_{\text{out}}, w^b)$, and
the hybrid parameterization is
\begin{align}
    \operatorname{NN}(t; w)
    =
    T_{\text{out}}\circ
     T_{n_{\text{hidden}}}\circ
    T_{n_{\text{hidden}}-1}\circ
    \cdots
    T_{1}
    \circ
    T_{\text{encoder}}
    (t; w^{\text{NN}}_{1:n_{\text{hidden}}},
    b^{\text{NN}}_{1:n_{\text{hidden}}},w^{\text{NN}}_{\text{out}})
    +
    \sum_{i}
     w_i^{b}\psi_i(t).
    \label{residualfourier}
\end{align}

\section{Numerical algorithms}
\label{sec:numerical algs}
In this section, we develop two numerical algorithms to sample from the posterior distribution $p(w, \theta\vert y)$ defined in \qref{marg_posterior}.
The first one is an optimization-based method called nested stochastic variational inference (NSVI).
The second one is a Markov chain Monte Carlo sampling-based method called nested preconditioned stochastic gradient Langevin dynamics (NPSGLD).
NSVI is computationally efficient and well-suited for high-dimensional problems, such as those with more than ten thousand unknown variables, though it is generally less accurate. 
NPSGLD, on the other hand, theoretically converges asymptotically but has a prohibitively slow mixing time for high-dimensional problems.
A practical guideline for choosing between these methods is as follows: for low-dimensional problems (e.g., fewer than one thousand unknown variables), both NSVI and NPSGLD are appropriate. 
NPSGLD provides better estimation results when NSVI uses a simple approximating probability distribution, such as a diagonal multivariate Gaussian. For high-dimensional problems, we recommend NSVI due to its faster convergence, despite some approximation error.

The relaxed physics-informed conditional prior defined in \qref{relaxed_prior} includes the normalization constant $Z(x_0, \theta)$, which depends on the auxiliary initial state and model parameters.
Since both NSVI and NPSGLD require the gradient of $\log Z(x_0, \theta)$,
we define the unnormalized relaxed physics-informed conditional prior as
$$
    \pi(w\vert x_0, \theta)
    =
    e^{-\beta_1 H_1(w, \theta)-\beta_2 H_2(\hat{x}(0;w), x_0)}
$$
so that we can write
$$p(w\vert x_0, \theta)=\frac{\pi(w\vert x_0, \theta)}{Z(x_0, \theta)}.$$

\subsection{Nested stochastic variational inference}

\subsubsection{Background on variational inference}
In variational inference, the goal is to find an optimal parameterized guide $q_{\phi}(z)$, e.g., a normal distribution, to approximate the posterior distribution $p(z\vert y)=\frac{p(y, z)}{p(y)}$.
Variational inference achieves this by minimizing the Kullback-Leibler (KL) divergence~\citep{kullback1951information} between the guide and the posterior distribution:
\begin{align*}
\min_{\phi}
\quad
    D_{\text{KL}}\left(
     q_{\phi}(z)\Vert 
     p(z\vert y)
    \right)
    =
    \E{q_{\phi}(z)}
    {
    \log 
    \dfrac{
    q_{\phi}(z)
    }{
    p(z\vert y)
    }
    }.
\end{align*}
The KL divergence 
$D_{\text{KL}}\left(
     q_{\phi}(z)\Vert 
     p(z\vert y)
    \right)
$
is nonnegative.
When it equals to zero, $q_{\phi}(z)=p(z\vert y)$ almost everywhere.

However, directly evaluating the KL divergence is computationally infeasible due to the intractability of the evidence $p(y)$.
Instead, we maximize a dual objective function called evidence lower bound (ELBO)~\citep{jordan1999introduction, kingma2013auto}:
\begin{align*}
\max_{\phi}
\quad
    \operatorname{ELBO}\left(\phi\vert y\right)
    =
    \E{q_{\phi}(x)}
    {
        \log
        \dfrac{
        p(y, z)
        }{
        q_{\phi}(z)
        } 
    }.
\end{align*}
Maximizing the ELBO is equivalent to minimizing 
$D_{\text{KL}}\left(
     q_{\phi}(z)\Vert 
     p(z\vert y)
    \right)
$,
and it only requires evaluating $p(y, z)$ rather than $p(z\vert y)$.

\subsubsection{Nested stochastic variational inference to approximate the marginal posterior $p(w, \theta\vert y)$}

Selecting an appropriate guide to approximate the marginal posterior requires balancing expressiveness and computational complexity.
The state path function parameters $w$ are typically high-dimensional, e.g., more than one thousand, while the physical model parameters $\theta$ are low-dimensional, e.g., fewer than ten.
Therefore, to approximate $p(w, \theta\vert y)$, we factorize the guide as $q_{\phi, \psi}(w, \theta)=q_{\phi}(w)q_{\psi}(\theta)$, separating $w$ and $\theta$ for computational efficiency.
To accelerate computation, the high dimensional guide $q_{\phi}(w)$ can be a simple parametric form, e.g., a diagonal multivariate normal distribution. 
The low dimensional guide $q_{\psi}(\theta)$ can be more expressive, e.g., a full-rank normal distribution, to capture correlations between parameters.
Additionally, we specify a guide $q_{\chi}(x_0)$ for the auxiliary initial state.

Typically, measurement data $y$ satisfies the conditionally independent assumption, i.e.,
\begin{align*}
    \log p(y\vert w, \theta)
    =
    \sum_{i=1}^{n_d}
    \log p(y_i\vert  \hat{x}(t_i; w), \theta).
\end{align*}
To handle large datasets, we subsample a minibatch of indices $\mathcal{I}_{m_d}$ of size $m_d$ from the set $\{1, \cdots, n_d\}$ with a probability $
{n_d \choose m_d}^{-1}
$.

The objective of NSVI is to maximize the ELBO:
\begin{equation}
\label{posterior_elbo}
\begin{aligned}
    \max_{\phi, \psi, \chi}
    \quad
    \operatorname{ELBO}(\phi, \psi, \chi\vert y)
    =
    &+
    \E{q_{\phi}(w)q_{\psi}(\theta) q_{\chi}(x_0)
    p(\mathcal{I}_{m_d})
    }{
    \frac{n_d}{m_d}
    \sum_{i\in\mathcal{I}_{m_d}}
    \log p(y_i\vert w, \theta)
    +
        \log
        \left\{
        \dfrac{
        \pi(w\vert x_0, \theta) p(x_0, \theta)
        }{
        q_{\phi}(w)q_{\psi}(\theta) q_{\chi}(x_0)
        }
        \right\}
        }
    \\
    &-\E{q_{q_{\psi}(\theta) q_{\chi}(x_0)}
    }{\log Z(x_0, \theta)}
\end{aligned}.
\end{equation}
Justification for this objective can be found in Appendix~\ref{appendix:proof2}.

Maximizing this ELBO requires taking the gradient of the log partition function.
This is not trivial, as the partition function is defined using a high-dimensional integration \qref{partition}.
We devise an inner loop auxiliary stochastic variational inference, i.e., a nested loop, to sample from the relaxed physics-informed conditional prior.
We leave the detailed numerical implementation steps in Appendix~\ref{appendix:nsvi}.

\subsection{Nested preconditioned stochastic gradient Langevin dynamics}

\subsubsection{Background on preconditioned stochastic gradient Langevin dynamics}

MCMC sampling from a posterior distribution $p(z\vert y)$ using unadjusted overdamped Langevin dynamics \citep{langevin1908theorie} applies the following update step:
\begin{align*}
    \Delta z_k
    =
    \rho_k 
    \left(
        \nabla_{z}
        \log
            p(y\vert z_k)
        +
        \nabla_{z}
        \log
            p(z_k)
    \right)
    +
    \sqrt{2\rho_k} \xi_k,
\end{align*}
where $\rho_k$ is the learning rate, and
$\xi_k$ follows a multivariate diagonal Gaussian
$
\mathcal{N}(0, I)
$.
The step size $\rho_k$ should satisfy the Robbins-Monro conditions \citep{robbins1951stochastic}
\begin{align*}
    \sum_{k}^{\infty} \rho_k = \infty,
    \quad
    \sum_{k}^{\infty} \rho^2_k < \infty
\end{align*}
to converge to a local maximum.

To scale to a large dataset, stochastic gradient Langevin dynamics (SGLD) subsamples the measurement data.
The update step is given by \citep{welling2011bayesian}
\begin{align*}
     \Delta z_k
    =
    \rho_k
    \left(
    \frac{n_d}{m_d}
        \sum_{i=1}^{m_d}
        \nabla_{z}
        \log
            p(y_{ki}\vert z_k)
        +
        \nabla_{z}
        \log
            p(z_k)
    \right)
    +
    \sqrt{2\rho_k} \xi_k.
\end{align*}

One of the issues in stochastic gradient Langevin dynamics is that it applies the same learning rate to all variables. 
This can result in updated step sizes that differ by several orders of magnitude across variables, leading to instability and uneven convergence rates. 
To address this, preconditioned stochastic gradient Langevin dynamics (PSGLD) introduces a preconditioning matrix
$
  M(z_k)
$
that adaptively scales the updates in Langevin dynamics.
An example is stochastic gradient Riemannian Langevin dynamics,  which updates according to~\citep{patterson2013stochastic}
\begin{align*}
    &
    \Delta z_k
    =
    \rho_k
    \left(
        M(z_k)
        \left(
        \frac{n_d}{m_d}
            \sum_{i=1}^{m_d}
            \nabla_{z}
            \log
                p(y_{ki}\vert z_k)
            +
            \nabla_{z}
            \log
                p(z_k)
        \right)
        +\Gamma(z_k)
    \right)
    +
    M(z_k)^{\frac{1}{2}}
    \sqrt{2\rho_k} \xi_k,
\end{align*}
where 
$
\Gamma_i(z_k)
=
\sum_{j}
\partial_j M_{ij}(z_k)
$.
When $M(z_k)$ is full rank, computing $\Gamma(z_k)$ becomes numerically expensive. 
To reduce computation while maintaining effectiveness, we apply a diagonal precondition strategy \citep{li2016preconditioned}, using the RMSprop rule \citep{hinton2012neural}:
\begin{equation*}
    \begin{split}     
    V(z_k)
    &=
    \alpha V(z_{k-1})
    +
    (1-\alpha)
    g(z_k)\odot g(z_k),
    \\
    M(z_k)
    &=
    \operatorname{diag}
    \left(
        \frac{1}{
        \delta +
        \sqrt{
        V(z_k)
        }
        }
    \right).
    \end{split}
\end{equation*}
In the above equations, we define
\begin{align*}
    g(z_k)
    =
    \frac{1}{m_d}
    \sum_{i=1}^{m_d}
    \nabla_{z}
    \log
    p(y_{ki}\vert z_k),
\end{align*}
where $\odot$ denotes element-wise multiplication.
This preconditioning strategy helps to maintain consistent updated step sizes across variables, keeping them within the same order of magnitude.
The two hyperparameters are $\alpha$ which determines the memory size, and $\delta$ which prevents numerical instability when $ V(z_k)$ approaches zero and controls
the extremes of curvature in the preconditioner \citep{li2016preconditioned}.

\subsubsection{NPSGLD to sample from the marginal posterior $p(w, \theta\vert y)$}

To sample from the marginal posterior $p(w, \theta \vert y)$, we first sample $\left\{w_{k}, \theta_k, x_{0,k}\right\}$ from the joint posterior 
$p(w, \theta, x_0\vert y)$, and only retain
$\left\{w_{k}, \theta_k\right\}$ to marginalize out $x_0$.
Using the shorthand notation 
$M_k = M(w_k, x_{0, k}, \theta_k)$ and
$\Gamma_k = \Gamma(w_k, x_{0, k}, \theta_k)$, the update step is given by
\begin{equation}
\label{npsgld_update}
\begin{aligned}
    \left[
        \Delta w_k, 
        \Delta \theta_k,
        \Delta x_{0,k}
    \right]
    =&
    +
    \rho_k 
    \left(
    M_k
    \left(
            \frac{n_d}{m_d}
            \sum_{i=1}^{m_d}
            \nabla_{w, \theta}
            \log
                p(y_{ki}\vert w_k, \theta_k)
            +
            \nabla_{w, \theta, x_0}
            \log
            \frac{
            \pi(w_k\vert x_{0,k}, \theta_k)p(x_{0,k},\theta_k)
            }
            {Z(x_{0, k}, \theta_k)}
    \right)
    +\Gamma_k
    \right)
    \\
    &
     +
    M_k^{\frac{1}{2}}
    \sqrt{2\rho_k} \xi_k.
\end{aligned}.
\end{equation}

As in NSVI, the update rule requires calculating the gradient of the log partition function.
We implement an inner loop auxiliary preconditioned stochastic gradient Langevin dynamics, i.e., a nested loop.
The preconditioning matrix $M_k$ also depends on the log partition function.
Detailed numerical implementation steps are provided in Appendix \ref{appendix: npsgld}.

\subsection{
Inferring the hyperparameter $\beta$
}

The choice of the hyperparameter $\beta$ typically involves either a pragmatic selection or hyperparameter optimization.
In this paper, we adopt the first strategy.
For a detailed discussion on hyperparameter optimization, readers may refer to section 2.5 in \citep{hao2024information}.
Specifically, one can parameterize $\beta$ using a softplus transformation of an unconstrained variational parameter and maximize the likelihood with respect to this variational parameter.

\section{Numerical examples}
\label{section: example}

In this section, we conduct several examples to verify and validate NIFF.
In section~\ref{example 1}, we consider a synthetic example based on a single-degree-of-freedom Duffing oscillator.
This example is the same as one example from \citep{hao2024information}.
We use this example to compare and verify the proposed non-reparameterized state path function approach in this paper with the reparameterized state path function approach published in \citep{hao2024information}.
In section~\ref{section:2dof}, we study the improvement using a residual function to parameterize the state path function.
Specifically, we use a two-degree-of-freedom nonlinear system considered in \citep{kong2022non}.
In section~\ref{section:high}, we demonstrate the performance of NIFF on a high-dimensional twenty-story frame structure model problem.
In section~\ref{section:nes}, we validate NIFF using an experimental nonlinear energy sink problem.
In all examples, we use the same parametric guide, as detailed in section~\ref{example 1} and
we report the 90\% quantile posterior and predictive results.
The computational costs for the examples are provided in Appendix~\ref{appendix: computational time}.

\subsection{Comparison between reparameterized and non-reparameterized state path functions}
\label{example 1}

In this example, we verify the proposed relaxed physics-informed conditional prior approach by comparing it with the reparameterized approach developed in \citep{hao2024information}.
We use the same Duffing oscillator example from section 3.3 in \citep{hao2024information}.

The Duffing oscillator is described by the following equations:
\begin{align}
     \dot{x}_{1}(t) &= x_2 (t), \nonumber
     \\
     \dot{x}_{2}(t) &= -k_1 x_{2}(t) - k_2 x_{1}(t) - k_3 x_{1}(t)^3 + \gamma \cos{(\omega t)}, \nonumber
     \\
     Y(t) &= x_1 (t) + \sigma_y V(t).\nonumber
 \end{align}
This model describes a damped oscillator undergoing a nonlinear restoring force term $k_3 x_1 (t)^3$. 
The oscillator is excited by a cosine signal with known amplitude $\gamma$ 0.37 m and frequency $\omega$ 1.2 rad/s.
To reconstruct the state function and model parameters, we use position measurements perturbed by a scaled white noise process $\sigma_y V(t)$.

The reference true parameter values are 
$k_1 = 0.3$, $k_2 = -1$, and $k_3 = 1$.
To improve the numerical stability, we normalize the state variables, model parameters and measurement data by the constants
$
(\bar{x}_1, \bar{x}_2) = (1.5, 1)
$,
$
(
\bar{k}_1, \bar{k}_2, \bar{k}_3
)
=
(1, 1, 1)
$,
and $\bar{y}=1.5$.

To generate synthetic measurement data, we ran a 50-second forward simulation using the Runge-Kutta method \citep{press2007numerical} with a time step of 0.01s.
The initial states are $(x_1(0), x_2(0))^T=(1, 0)^T$, and the measurement noise standard deviations are set to 5\% of the measurement normalization constant.

We parameterize the state function using the truncated Fourier series
\begin{align}
    \label{truncatedfourier}
    \hat{x}(t; w) = w_0 + \sum_{k=1}^{K} 
    \left(
    w_{2k-1}\sin \frac{2\pi k}{\Bar{T}}
    +
    w_{2k}\cos{\frac{2\pi k}{\Bar{T}}}
    \right)
\end{align}
with $K=40$.
We do not include a residual function, as this example focuses on verifying the proposed non-reparameterized state path function approach.

We compare the posterior distributions approximated by the following four numerical methods.
For all four cases, the prior distributions for the model parameters, initial states, and auxiliary initial states are standard normal distributions. 
The setups are as follows.

First, we use the reparameterized state function approach from~\citep{hao2024information} and solve it using NSVI.
We choose a diagonal normal distribution as the guide for $w$ and $x_0$, and a full-rank normal distribution as the guide for $\theta$.
Readers can refer to section 2.4.4~\citep{hao2024information} for a detailed discussion on guide selection.
We also emphasize that our objective in developing NSVI is to create a more efficient alternative to sampling-based methods. 
While more advanced guides, such as normalizing flow~\citep{rezende2015variational, papamakarios2021normalizing}, can approximate complex distributions,
they significantly increase computational time due to entropy evaluation.
Training such complex guides can be even slower than running MCMC.
These advanced guides are primarily used in applications requiring reusability, such as amortized variational inference~\citep{zhang2018advances} and generative models~\citep{zhang2021diffusion}.
In terms of other settings,
the inverse temperature $\beta=200$, and the remaining optimization settings (e.g., learning rate and sample sizes) follow those in~\citep{hao2024information}.

Second, we use the relaxed physics-informed conditional prior proposed in this paper and solve it with NSVI.
The guide for the Fourier coefficients and the auxiliary initial state is a diagonal normal distribution, while the model parameter guide is a full-rank normal distribution. 
 We set the hyperparameters $\beta_1=200$ and $\beta_2=100,000$.
 The sample sizes used in Algorithm \ref{alg:PSGLD_post} are
 $
 (n_{\epsilon\eta\zeta},n_t,  \tilde{n}_\epsilon, \tilde{n}_t, m_y)
 =
 (1, 10, 1, 10, 10)
 $.
The initial learning rate is 0.001 and we exponentially decay it every 100,000 iterations by a factor of 0.1.
We use the Adam optimizer~\citep{kingma2014adam} to update the guide parameters.
The Adam parameters are set to default, i.e., $b_1=0.9$, $b_2=0.999$.

Third,  we apply the relaxed physics-informed conditional prior and solve it using NSGLD developed in \citep{alberts2023physics}.
We initialize three MCMC chains and sample them in parallel for $2\times 10^6$ steps.
We use the same hyperparameters $\beta_1=200$ and $\beta_2=100,000$.
Sample sizes are the same as in case two.
The learning rates for all parameters are $10^{-6}$ and kept constant throughout the sampling.

Last, we use the relaxed physics-informed conditional prior and solve it using NPSGLD.
Sample sizes are the same as in case two.
Due to the preconditioning matrix, we can use more aggressive learning rates early in sampling.
We set the initial learning rate to $10^{-4}$ and incorporate an exponential decay scheduler that decays the learning rate to $10^{-5}$ after $10^6$ iterations.
We then hold the rate constant.
The preconditioning matrix hyperparameter is set to $\delta=0.1$. The initial memory size $\alpha=0.99$ which is gradually annealed to 1 after $10^6$ iterations.

Figs \ref{comp_state} and \ref{comp_para} plot the posterior distributions of the state function and model parameters for the four methods.
For the three sampling approaches, we keep only the final $10^{6}$ samples and thin the samples every 1000 steps.
The uncertainty in the state function $x_2$ is much higher than $x_1$, as measurements are only available for $x_1$.
The results show a great agreement of the four methods.
Fig \ref{comp_para_conv} plots the convergence speeds of the parameter posterior distributions.
The upper half of the figure shows the results for the entire $2\times10^6$ iterations, while the lower half zooms in on the first $10^5$ iterations.
The plots indicate that NSVI converges faster than NSGLD and NPSGLD.
Remarkably, NPSGLD significantly improves the convergence speed compared to NSGLD.
Fig \ref{comp_x0} compares the posterior distribution of the initial state of the parameterized state path function $p(\hat{x}(0;w)\vert y)$ with the posterior distribution of the auxiliary initial state $p(x_0\vert y)$.
Since NSVI is an approximation method, the two distributions show slight differences.
In contrast, NSGLD and NPSGLD yield nearly identical distributions.

\begin{figure}[!htb]
  \centering
\includegraphics[scale=0.43]{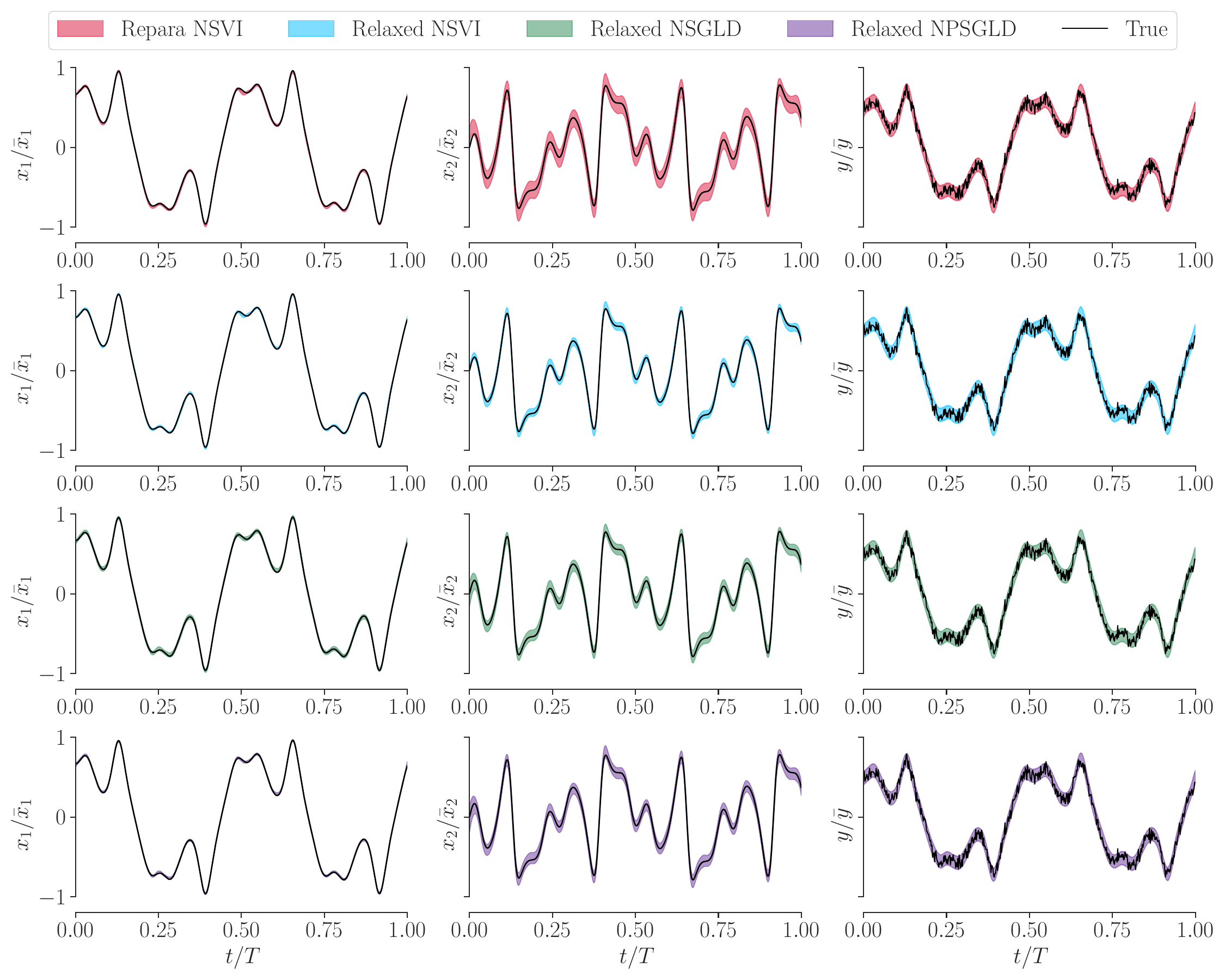}
  \caption{
  Example of section \ref{example 1}: posterior distributions of states and measurements.
  }
  \label{comp_state}
\end{figure}

\begin{figure}[!htb]
\centering
  \includegraphics[scale=0.43]{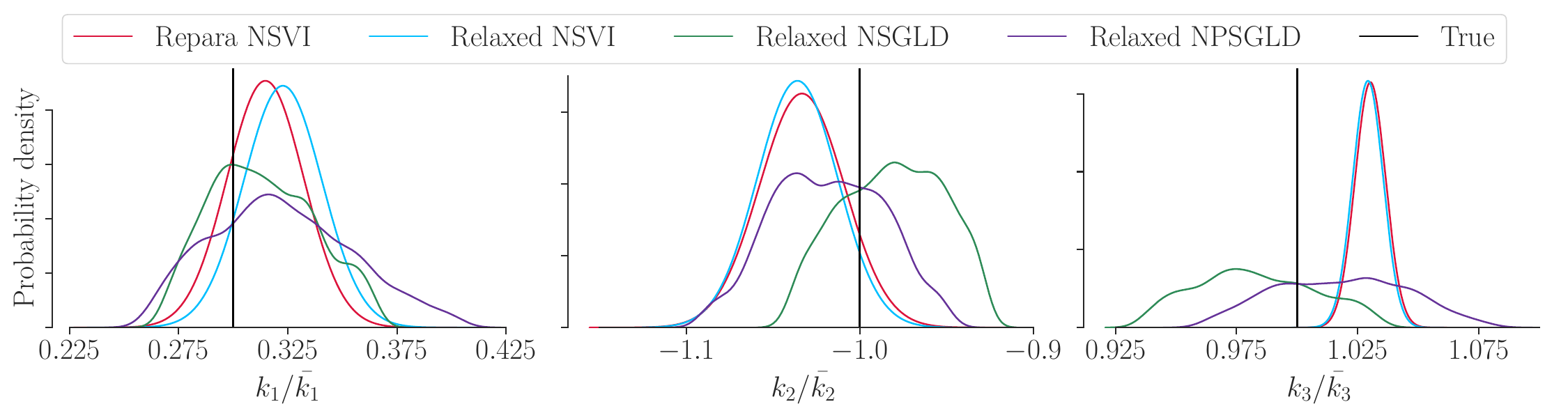}
  \caption{Example of section \ref{example 1}: posterior distributions of model parameters.}
  \label{comp_para}
\end{figure}

\begin{figure}[!htb]
\centering
  \includegraphics[scale=0.43]{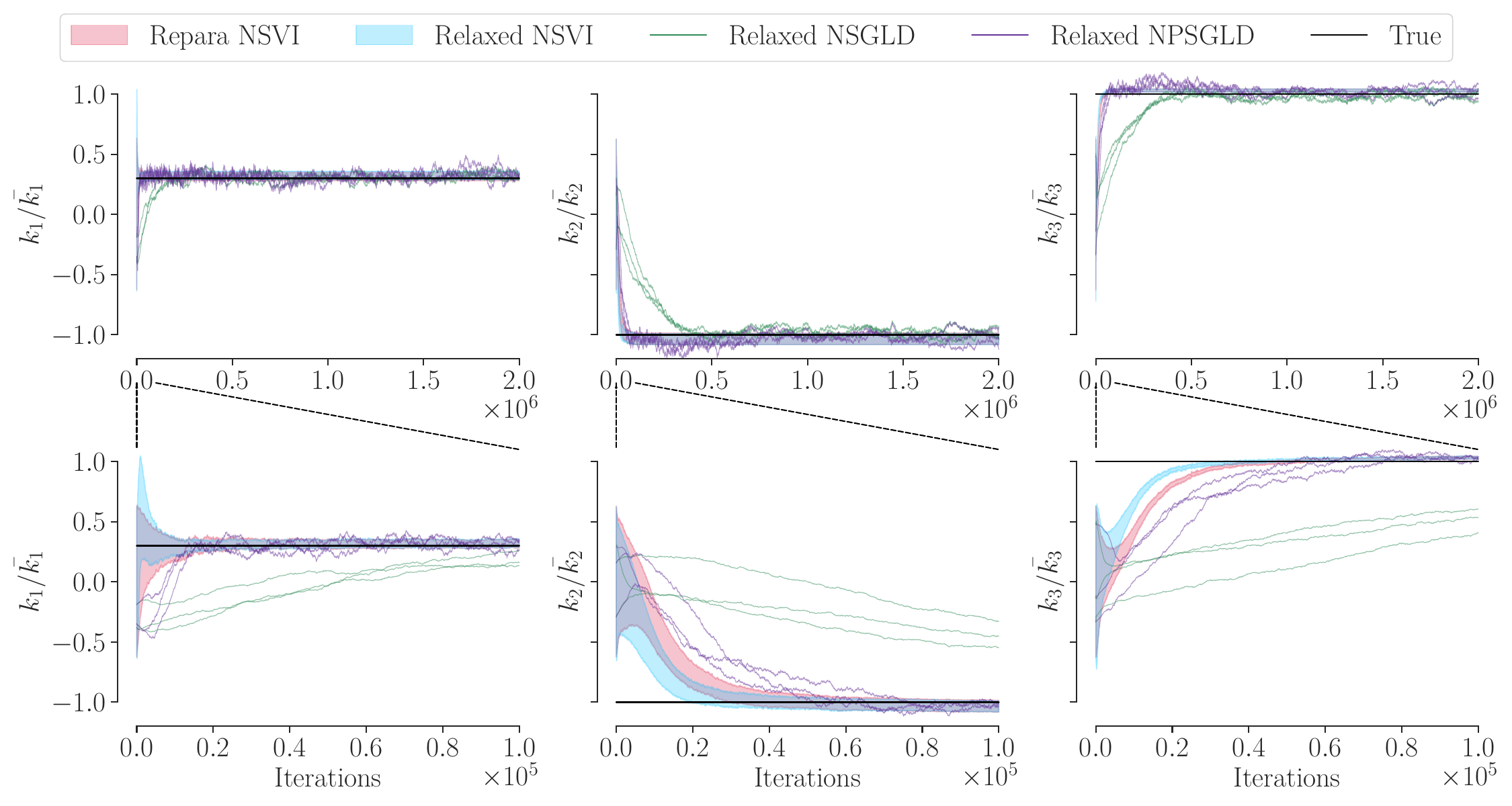}
  \caption{Example of section \ref{example 1}: convergence speeds of the model parameter posterior.}
  \label{comp_para_conv}
\end{figure}

\begin{figure}[!htb]
\centering
    \includegraphics[scale=0.43]{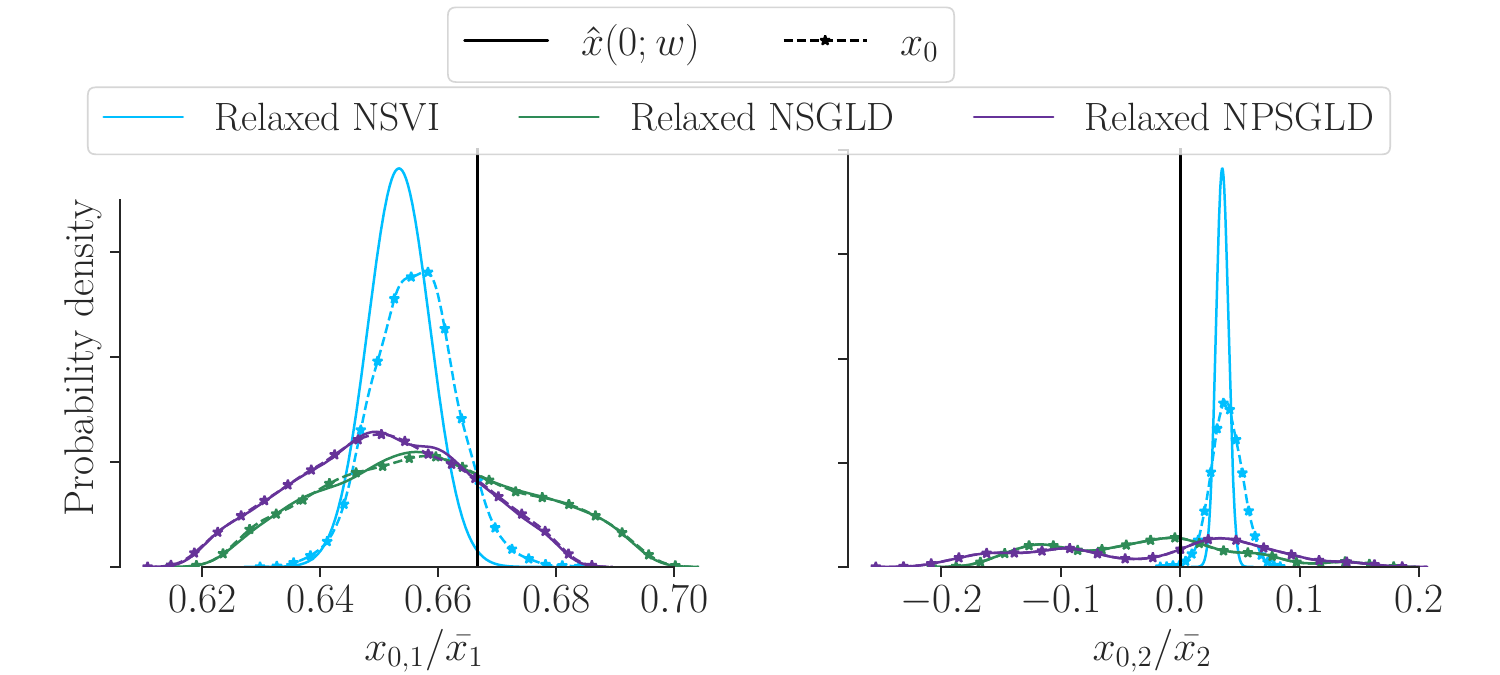}
  \caption{Example of section \ref{example 1}: posterior distributions of the state path initial state $\hat{x}(0;w)$ and the auxiliary initial state $x_0$.}
  \label{comp_x0}
\end{figure}

\subsection{Improvement using a residual function}
\label{section:2dof}

In this example, we investigate the improvement in state reconstruction and parameter estimation accuracy by including a residual function in neural networks.
We use a two-degree-of-freedom nonlinear system considered in \citep{kong2022non}.
The system consists of two masses: $m_1$ is connected to the wall through a linear damper and a linear-plus-cubic (Duffing) nonlinear spring, and $m_2$ is connected to $m_1$ via a linear spring and a linear-plus-cubic nonlinear damper.
We excite $m_1$ by a sinusoidal signal.
The absolute displacements of $m_1$ and $m_2$ are denoted by $y_1$ and $y_2$, respectively.
We define the new displacement variables $q_1 = y_1$ and $q_2 = y_2-y_1$ to write the governing equations:
\begin{align*}
    \begin{bmatrix}
        m_1, & 0 \\
        m_2, & m_2
    \end{bmatrix}
    \begin{bmatrix}
        \Ddot{q}_1\\
        \Ddot{q}_2
    \end{bmatrix}
    +
    \begin{bmatrix}
        c_1, & -c_2\\
        0, & c_2
    \end{bmatrix}
    \begin{bmatrix}
        \dot{q}_1 \\
        \dot{q}_2
    \end{bmatrix}
    +
    \begin{bmatrix}
        k_1, & -k_2 \\
        0, & k_2
    \end{bmatrix}
    \begin{bmatrix}
        q_1\\
        q_2
    \end{bmatrix}
    +
    \begin{bmatrix}
        k_1\epsilon_1q_1^3-c_2 \epsilon_2 \dot{q}_2^3
        \\
        c_2\epsilon_2\dot{q}_2^3
    \end{bmatrix}
    =
    \begin{bmatrix}
        F_0\sin{(\omega_0 t)}
        \\
        0
    \end{bmatrix}.
\end{align*}

Let $x_1 = q_1$, $x_2 = \dot{q}_1$, $x_3 = q_2$ and $x_4=\dot{q}_2$, we obtain a state space model with four state variables.
We measure the displacements $q_1$ and $q_1+q_2$ of $m_1$ and $m_2$, respectively.

The reference true parameter values are $m_1=m_2=1$, $c_1=c_2=0.2$, $k_1=k_2=1$, and $\epsilon_1=\epsilon_2=0.2$.
We normalize all eight parameters and the four states by 1.
The normalization constants for the measurements are $(\bar{y}_1, \bar{y}_2)=(1, 2)$.

To generate synthetic measurement data, we ran a 50-second forward simulation using the Runge-Kutaa method with a time step of 0.1s.
The initial states are $(x_1(0), x_2(0), x_3(0), x_4(0))^T=(0, 0, 0.5, 0)^T$.
The measurement noise standard deviations are set to 5\% of the measurement normalization constant.

For the parameterized state path function, the linear basis function is the radial basis
\begin{align*}
\hat{x}(t;w)
=
\sum_{k=1}^{K_b}
w_k
e^{
-\frac{
(x-z_k)^2
}{
2\sigma_{k}
}
},
\end{align*}
where $z_k$ and $\sigma_k$ are location and scale parameters. 
Specifically, we set $K_b=20$ and $\sigma_k=0.05$.
The location parameters $z_k$ are evenly spaced from 0 to 1.
The residual function includes a Fourier encoding layer with $K=10$, one hidden layer of width 10, and the swish activation function~\citep{ramachandran2017searching}.
This parameterization is designed because the radial basis function alone is insufficient to approximate the ground truth state path, and we anticipate that the residual function can correct this unknown error.

We ran four cases, including with and without the residual function, each solved by NSVI or NPSGLD.
For NSVI, we ran 300,000 iterations and annealed the partition function in the ELBO over the first 200,000 iterations.
The remaining optimization settings are the same as in the previous example.
For NPSGLD, we ran 3 MCMC chains in parallel for 3,000,000 steps, burned the first 1,000,000 samples, and thinned the chains every 10,000 steps.
The other optimization settings remain unchanged from the previous setup.

Fig. \ref{postpredstate_2dof} shows
the comparison of the posterior and predictive distributions of the four states and two measurements using NSVI and NPSGLD.
The left column presents NSVI results, and the right column presents NPSGLD results.
Each subplot is further divided: the left side shows the posterior distribution, and the right side shows the posterior predictive distribution. 
 Within each subplot, we compare results with and without the residual function.
 It is evident that the radial basis function alone cannot accurately reconstruct the state path, but with the residual function included, both NSVI and NPSGLD successfully reconstruct the state path.
Fig. \ref{postpara_2dof} compares the posterior distributions of the eight model parameters using NSVI and NPSGLD.
With the residual function, the posterior distributions from both algorithms, shown in green and purple,  are close to the ground truth values.
However, without the residual function, both algorithms fail to identify correct parameter values, shown in red and blue.
We used the parameter posterior distributions to generate posterior predictive distributions of the four states and two measurements, as shown in Fig.~\ref{postpredstate_2dof}, where the benefit of including a residual function is evident.
Finally, Fig.~\ref{convergence_speed_2dof} shows the convergence speeds of the parameter estimates.
The convergence speed of NPSGLD is quite acceptable compared to NSVI, and the results from NSVI and NPSGLD with the residual function (shown in green and purple) show strong agreement.
Moreover, without the residual function, the posterior of model parameters becomes non-identifiable. 
NPSGLD is unstable and converges to multiple local modes of the posterior distribution.

\begin{figure}[!htb]
  \centering
\includegraphics[scale=0.43]{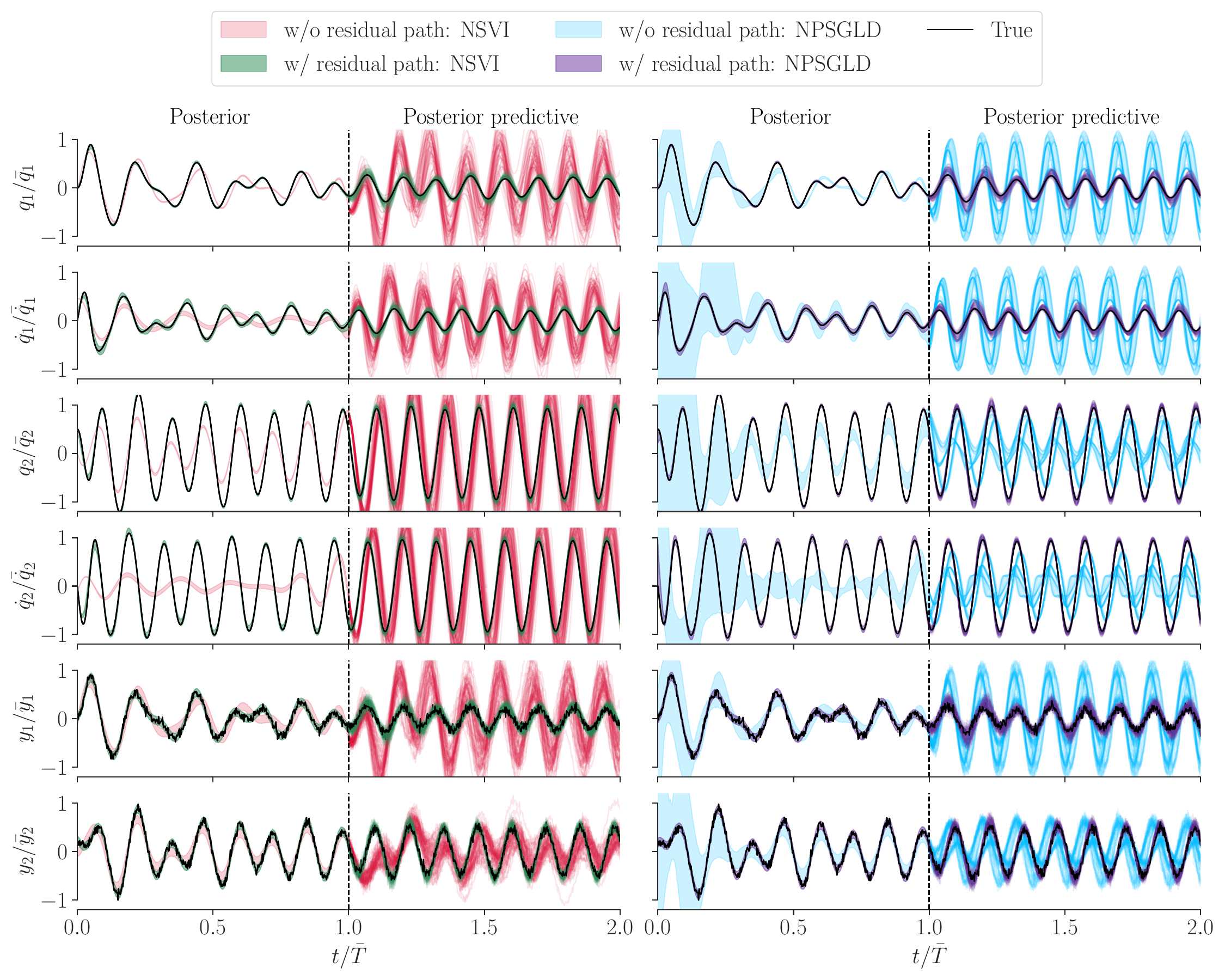}
  \caption{Example of section \ref{section:2dof}: posterior and predictive distributions of the states and measurements.}
  \label{postpredstate_2dof}
\end{figure}

\begin{figure}[!htb]
\centering
  \includegraphics[scale=0.43]{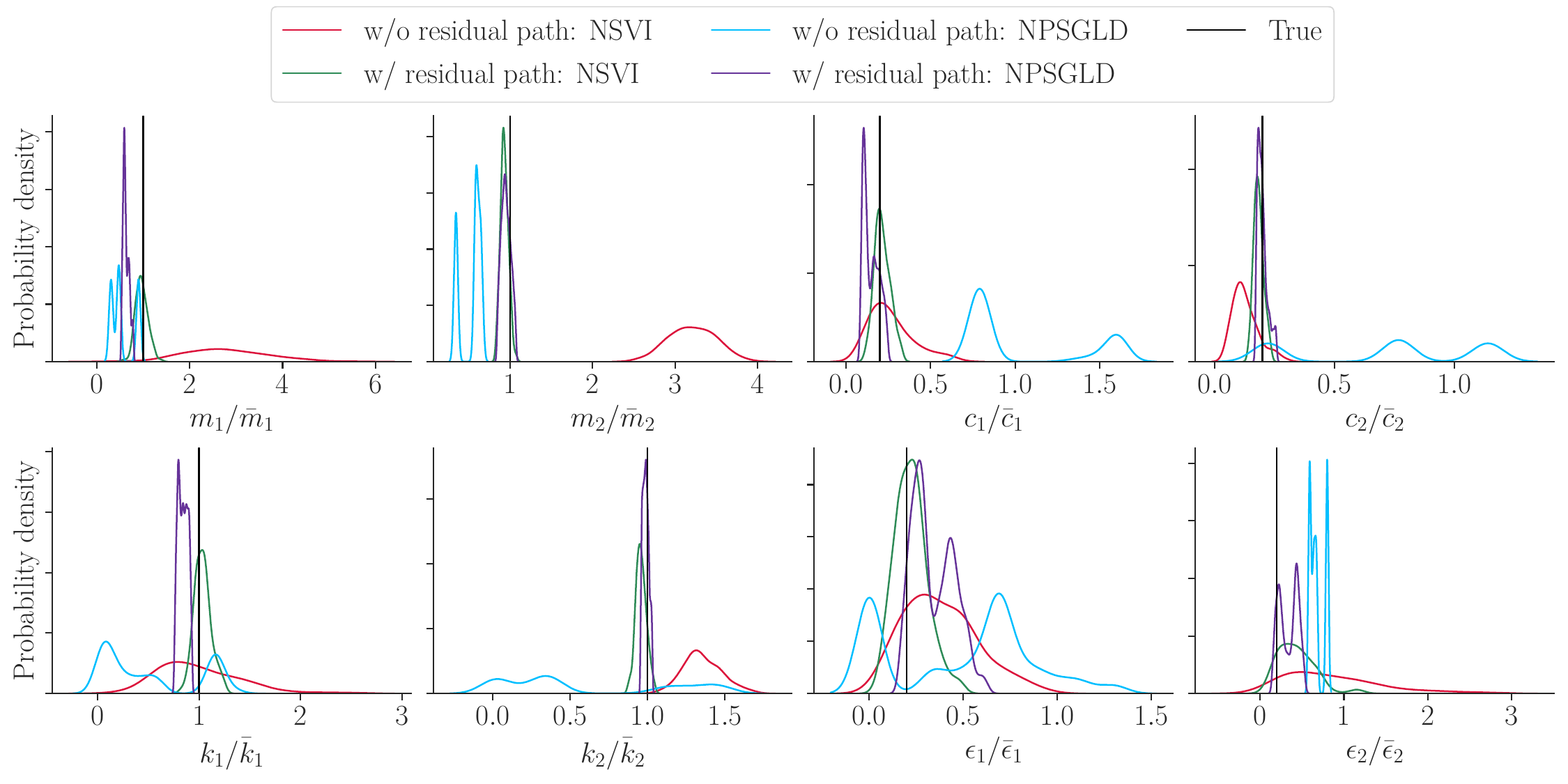}
  \caption{Example of section \ref{section:2dof}:  posterior distributions of model parameters.}  
  \label{postpara_2dof}
\end{figure}

\begin{figure}[!htb]
\centering
  \includegraphics[scale=0.43]{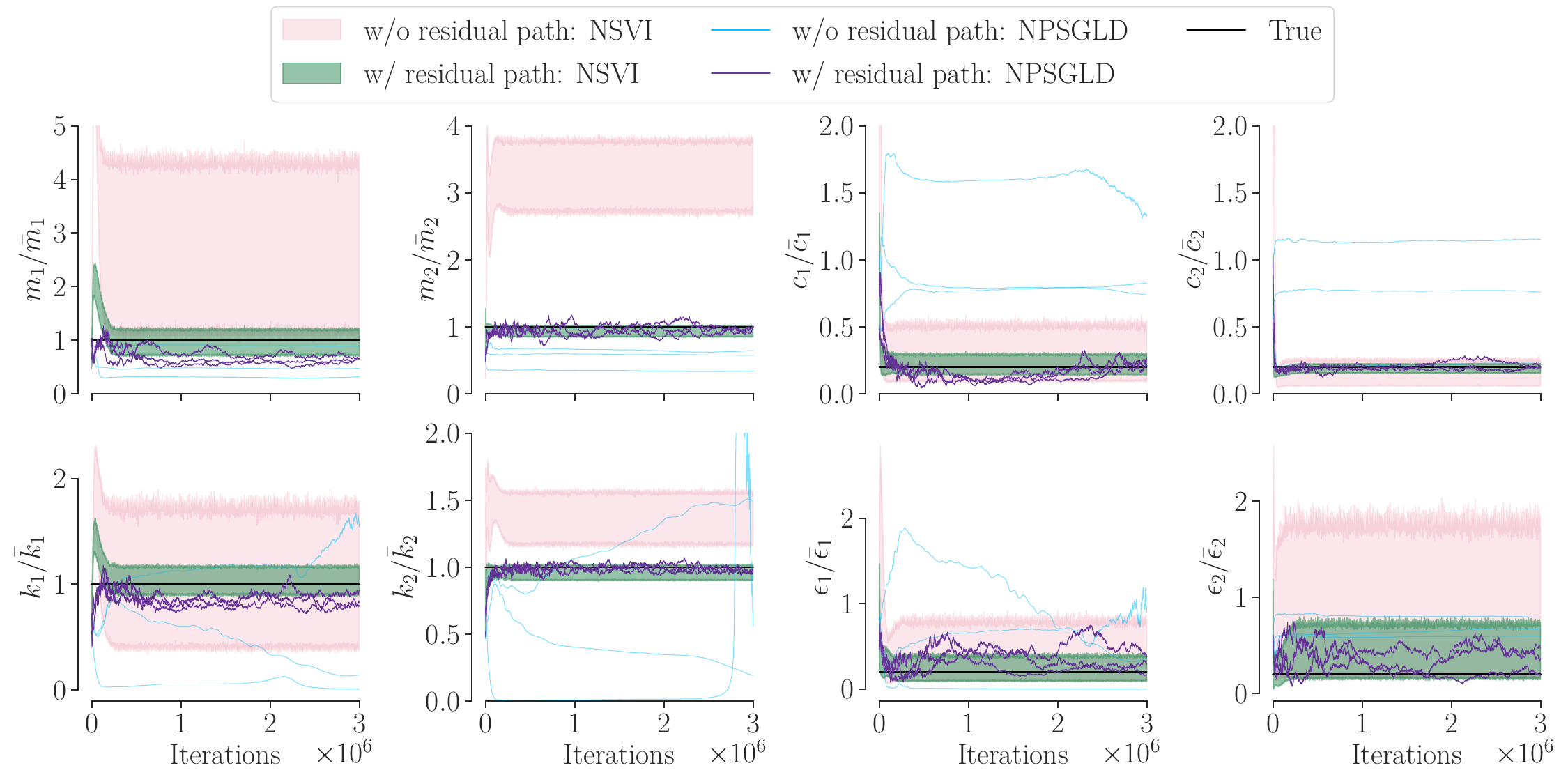}
  \caption{Example of section \ref{section:2dof}: convergence speeds of the model parameter posterior.}
  \label{convergence_speed_2dof}
\end{figure}

\subsection{High-dimensional problem: a twenty-story frame structure}
\label{section:high}

In this example, we study a twenty-story Bouc-Wen frame structure as a high-dimensional problem modified from~\citep{li2024multiple}.
The dynamic equations are
\begin{align*}
    \begin{bmatrix}
        m_1 & & &\\
        & m_2 & &\\
        & &\ddots &\\
        &&& m_{20}
    \end{bmatrix}
    \begin{bmatrix}
        a_1
        \\
        a_2
        \\
        \vdots
        \\
        a_{20}
    \end{bmatrix}
    +
    \begin{bmatrix}
        c_1+c_2 & -c_2 & &\\
        -c_2 & c_2+c_3 & -c_3 &\\
        &&\ddots &-c_{20}\\
        &&-c_{20}&c_{20}
    \end{bmatrix}
    \begin{bmatrix}
        v_1
        \\
        v_2
        \\
        \vdots
        \\
        v_{20}
    \end{bmatrix}
    \\
    +
    \begin{bmatrix}
        s_1+s_2 &-s_2 & &\\
        -s_2 & s_2+s_3 & -s_3 &\\
        &&\ddots &-s_{20}\\
        &&-s_{20}&s_{20}
    \end{bmatrix}
    \begin{bmatrix}
        z_1
        \\
        z_2
        \\
        \vdots
        \\
        z_{20}
    \end{bmatrix}
    =
    -
    \begin{bmatrix}
        m_1
        \\
        m_2
        \\
        \vdots
        \\
        m_{20}
    \end{bmatrix}
    a_g,
\end{align*}
where $a_{1:20}$ and $v_{1:20}$ are acceleration and velocity for each story, respectively.
The system input $a_g$ is the ground acceleration.
The hysteretic displacement $z_l$, for $l=2,\cdots, 20$, is given by
\begin{align*}
    \dot{z}_l
    =
    (v_l-v_{l-1})
    -
    \beta 
    \vert
    v_l-v_{l-1}
    \vert
    \vert
    z_l
    \vert^{n-1}z_l
    -
    \gamma(v_l-v_{l-1})\vert z_l \vert^n,
\end{align*}
and for $l=1$ is given by
\begin{align*}
    \dot{z}_1
    =
    v_1 
    - 
    \beta 
    \vert
    v_1
    \vert
    \vert
    z_1
    \vert^{n-1}z_1
    -
    \gamma v_1\vert z_1 \vert^n.
\end{align*}
The Bouc-Wen parameters are $\beta$, $\gamma$ and $n$.

The acceleration of each story is the measurement:
\begin{equation*}
    \begin{bmatrix}
        a_1\\
        \vdots
        \\
        a_{20}
    \end{bmatrix}
    =
    -
    \begin{bmatrix}
        a_{g}\\
        \vdots
        \\
        a_{g}
    \end{bmatrix}
    -
    \begin{bmatrix}
        \frac{1}{m_1} & &
        \\
        &\ddots &
        \\
        &&\frac{1}{m_{20}}
    \end{bmatrix}
    \left(
    \begin{aligned}
    &+
    \begin{bmatrix}
        c_1+c_2 &-c_2 & &\\
        -c_2 & c_2+c_3 & -c_3 &\\
        &&\ddots &-c_{20}\\
        &&-c_{20}&c_{20}
    \end{bmatrix}
    \begin{bmatrix}
        v_1
        \\
        v_2
        \\
        \vdots
        \\
        v_{20}
    \end{bmatrix}
        \\
        &+
        \begin{bmatrix}
        s_1+s_2 & -s_2& &\\
        -s_2 & s_2+s_3 & -s_3 &\\
        &&\ddots &-s_{20}\\
        &&-s_{20}&s_{20}
    \end{bmatrix}
    \begin{bmatrix}
        z_1
        \\
        z_2
        \\
        \vdots
        \\
        z_{20}
    \end{bmatrix}
    \end{aligned}
    \right).
\end{equation*}

We set $m_{1:20}=1$ kg, $c_{1:20}=0.25$ Ns/m, and randomly sampled twenty stiffness values $s_{1:20}$ from a uniform distribution $\mathcal{U}([8, 10])$.
We assume $m_{1:20}$ and $c_{1:20}$ are known and only estimate $s_{1:20}$. 
To generate a synthetic measurement dataset, we used the first 5-second El-Centro NS earthquake signal 
\url{https://www.vibrationdata.com/elcentro.htm} as the ground acceleration $a_g$. 
We corrupted the measurement data $a_{1:20}$ by zero-mean Gaussian noises whose standard deviations were set to 1\% of the root-mean-square values of the story accelerations.

The state path is parameterized by 100 evenly spaced radial bases with the same length scale of 0.01.
The residual function includes a Fourier encoding layer with $K=10$, one hidden layer of width 10, and the swish activation function.

We ran NSVI 50,000 steps and NPSGLD 200,000 steps with 5 chains, respectively.
The setups for both algorithms are similar to those in the previous section.
Figs. \ref{high v}-\ref{high theta} plot the posterior distributions of velocities, hysteretic displacements, accelerations, and stiffnesses.
Both algorithms successfully reconstruct the 40 states.
For the stiffness parameters, NPSGLD identifies all $s_{1:20}$ with negligible errors, while NSVI accurately identifies $s_1$ and $s_{3:20}$, with a small error in $s_2$.
We used the full 30-second ground acceleration data to check the posterior predictive distribution.
Fig.~\ref{high pred} plots the result for story frames 1, 5, 10, 15, and 20.
The predictions are highly accurate, although $z_{15}$ and $z_{20}$ show minor errors.

\begin{figure}[!htp]
\centering
  \includegraphics[scale=0.43]{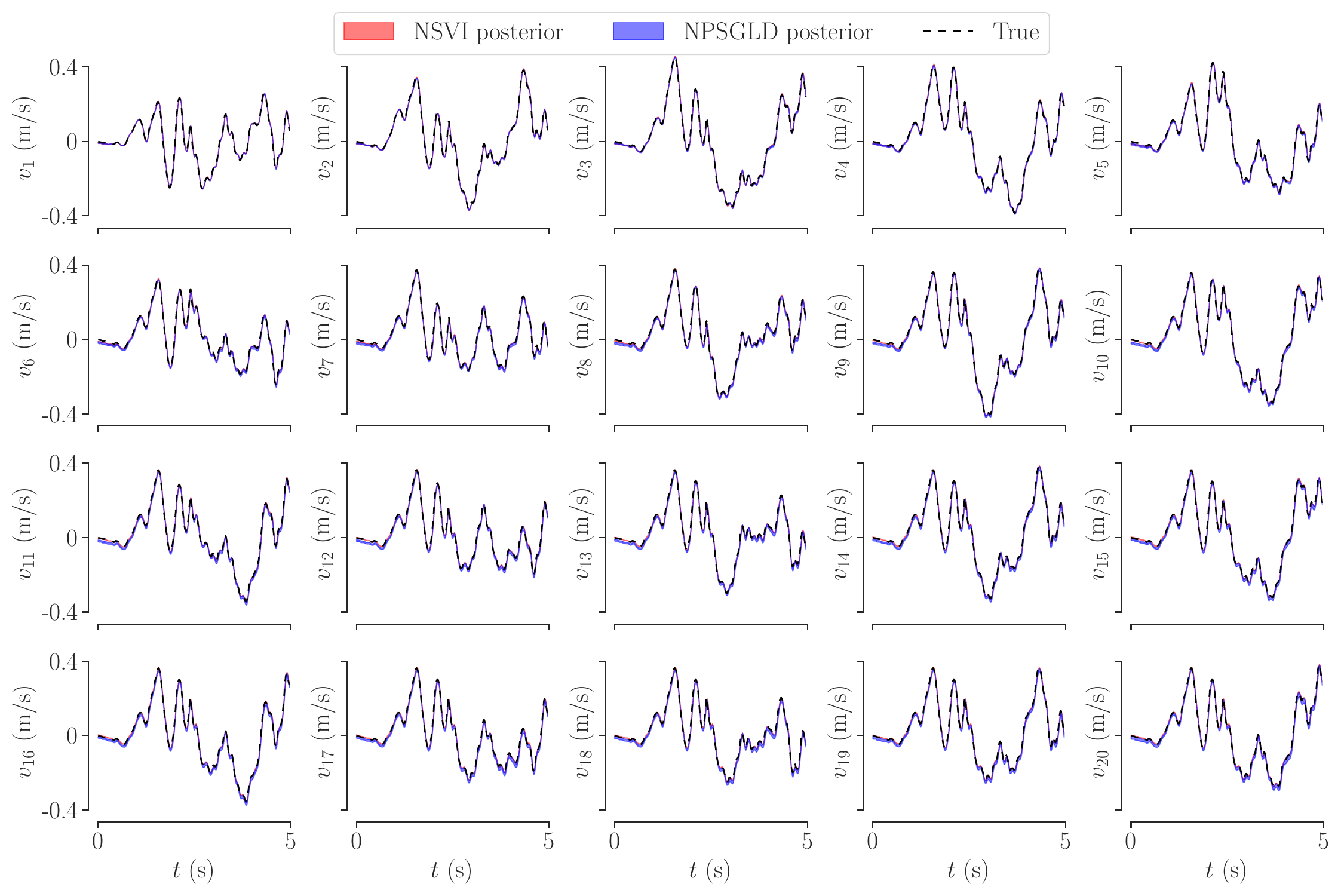}
  \caption{Example of section~\ref{section:high}:
  NSVI and NPSGLD posterior distributions of velocities.}
  \label{high v}

  \includegraphics[scale=0.43]{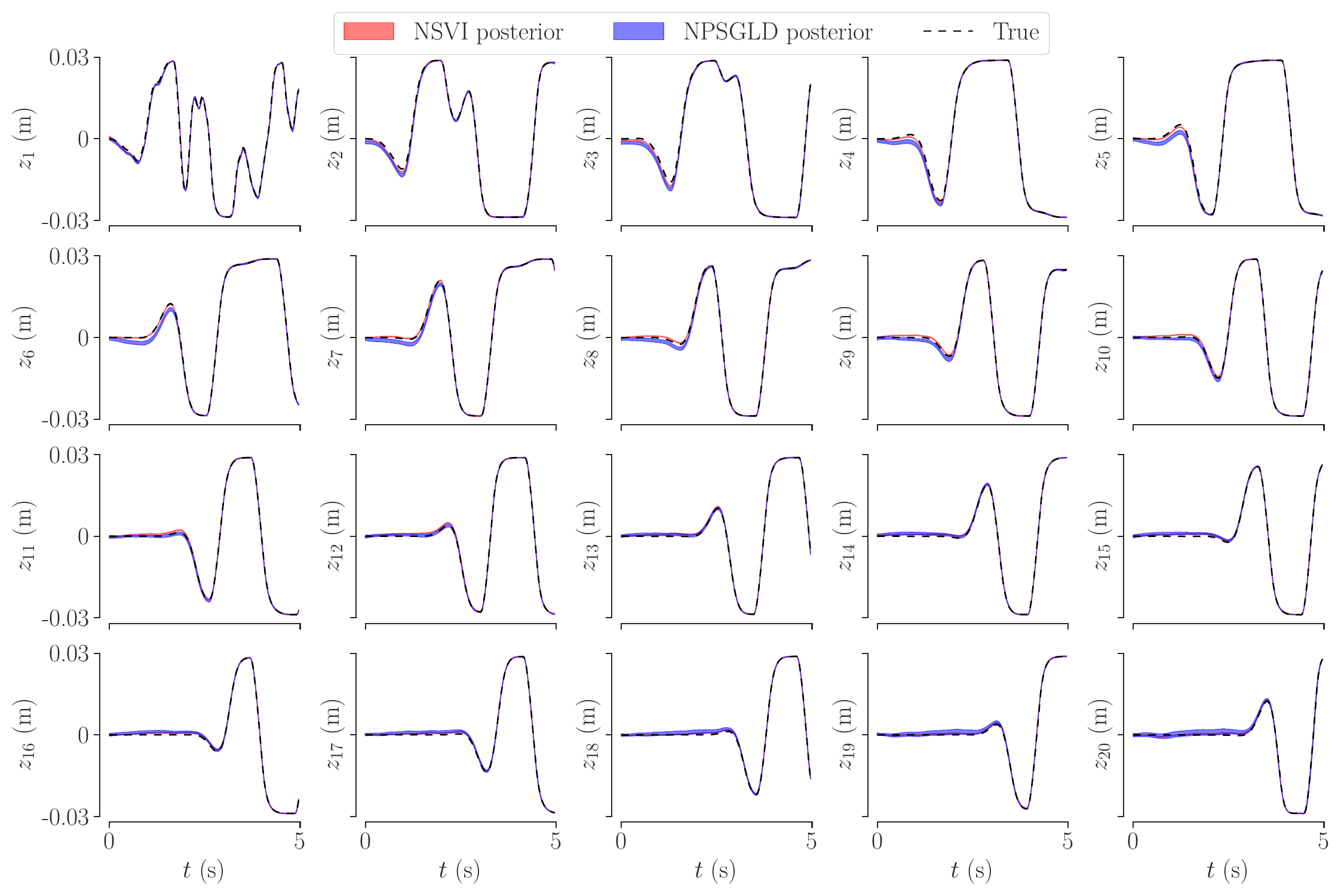}
  \caption{Example of section~\ref{section:high}:
  NSVI and NPSGLD posterior distributions of hysteretic displacements.}
  \label{high z}
\end{figure}

\begin{figure}[!htp]
\centering
  \includegraphics[scale=0.43]{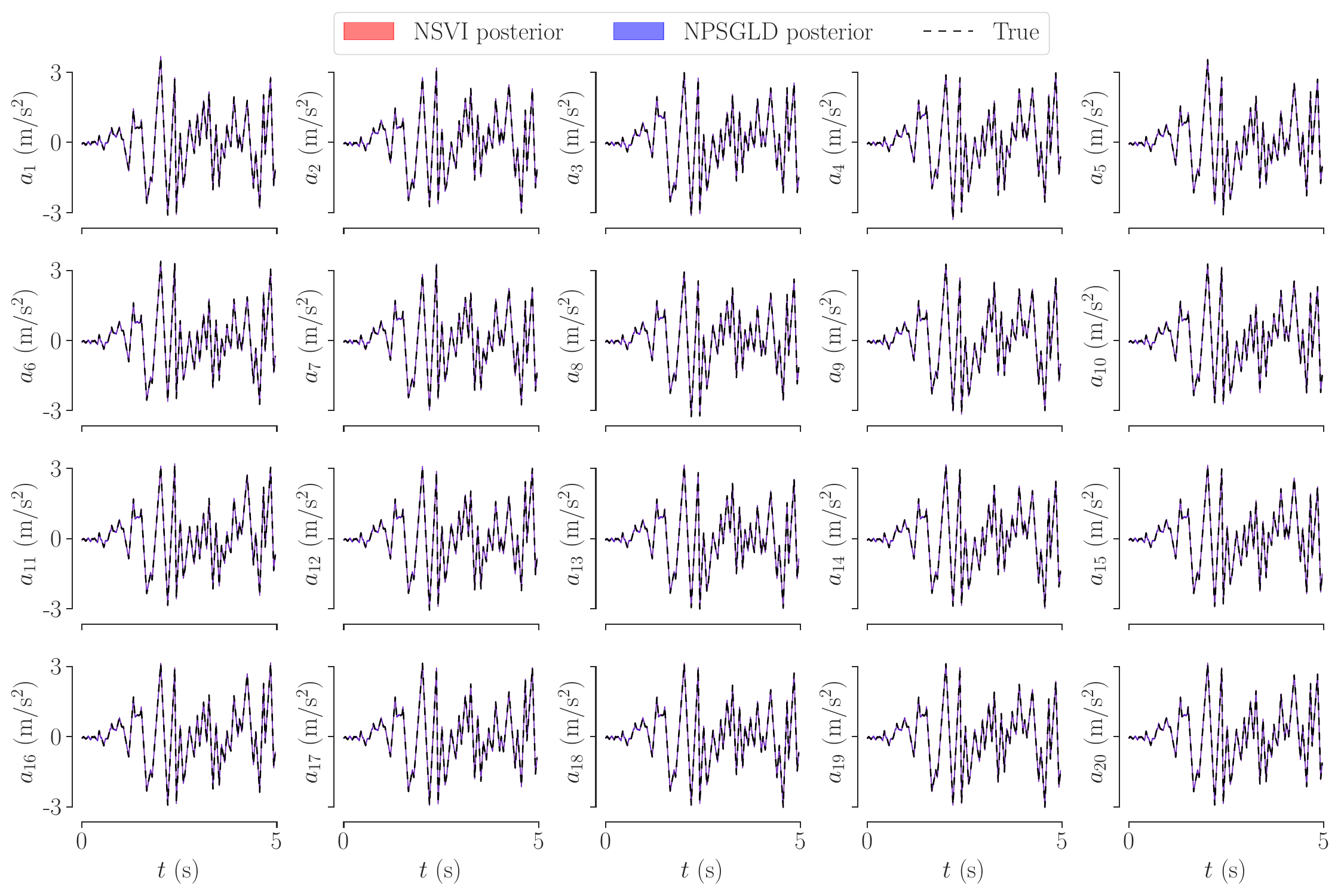}
  \caption{Example of section~\ref{section:high}: 
  NSVI and NPSGLD posterior distributions of acceleration measurements.}
  \label{high y}

  \includegraphics[scale=0.43]{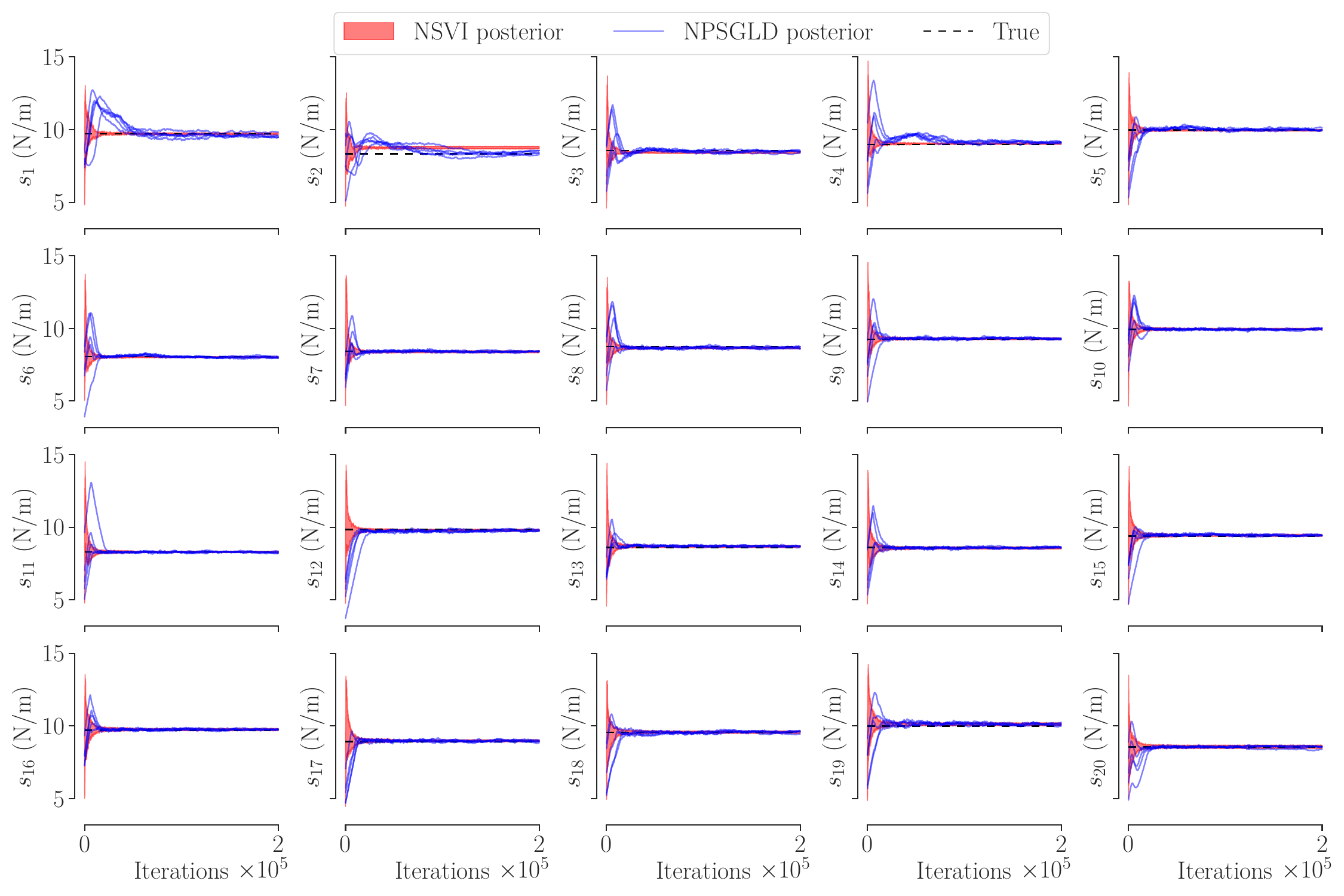}
  \caption{Example of section~\ref{section:high}: 
  NSVI and NPSGLD 
  convergence speeds of the model parameter posterior.}
  \label{high theta}
\end{figure}

\begin{figure}[!htp]
\centering
  \includegraphics[scale=0.43]{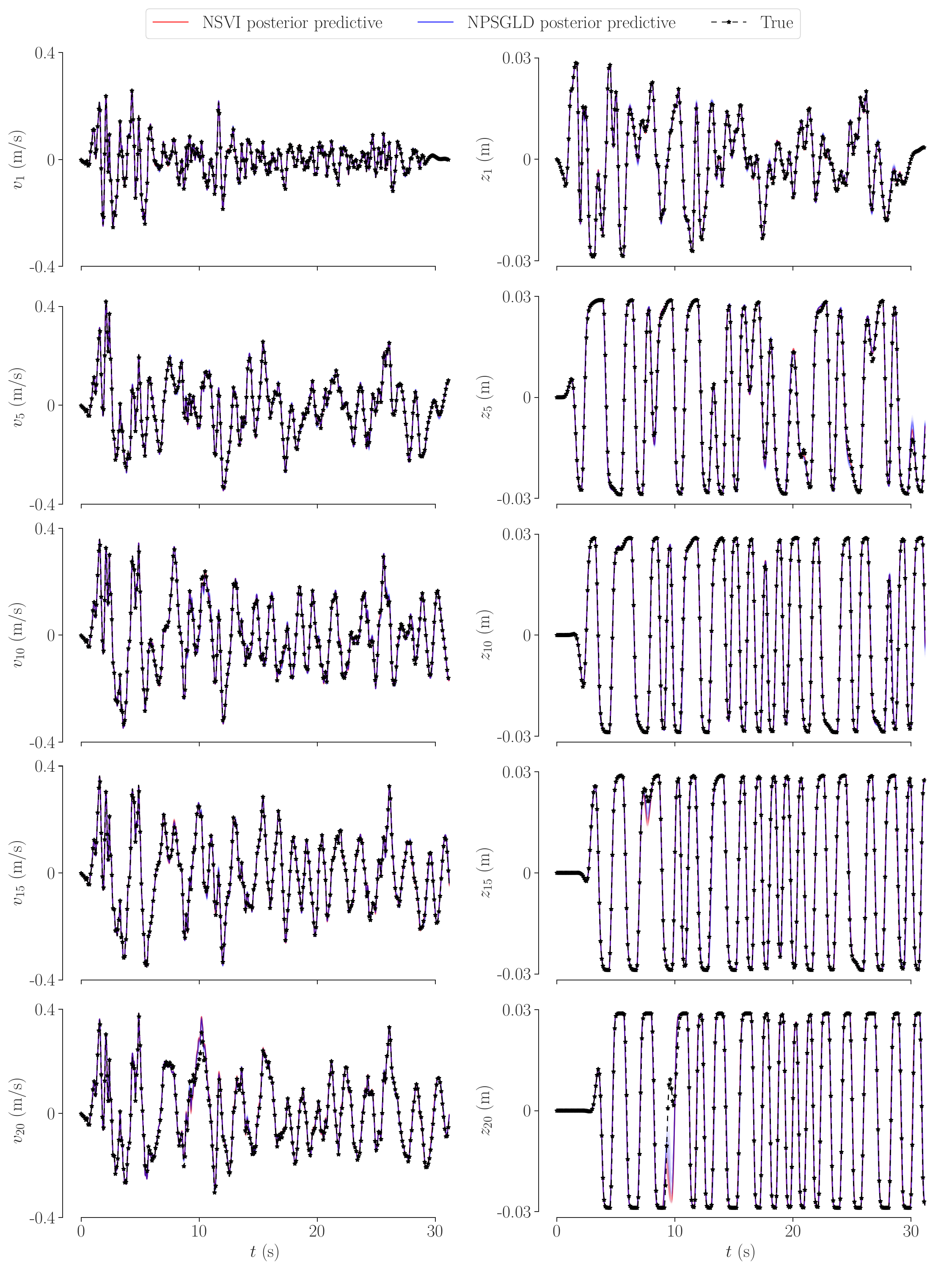}
  \caption{Example of section~\ref{section:high}: 
  NSVI and NPSGLD posterior predictive distributions.}
  \label{high pred}
\end{figure}

\subsection{Experimental example: nonlinear energy sink device}
\label{section:nes}

Last, we validate NIFF using an experimental example with a nonlinear energy sink device. 
The experimental details and data have been published in~\citep{silva2019evaluation}.
The nonlinear energy sink device is a Duffing-type oscillator, which is designed to transfer and dissipate energy. 
\citet{lund2021variational} used the unscented Kalman filter~\citep{wan2000unscented} to identify a mathematical model for this device from experimental data.
The model is given by
\begin{align}
\label{nes}
    m\Ddot{x}+c_{\nu}\dot{x}+c_f \tanh{(200\dot{x})}+kx +zx^3=
    -m \Ddot{x}_g,
\end{align}
where $\tanh{(200\dot{x})}$ is a differentiable approximation of Coulomb damping $\text{sign}\dot{x}$, and $\Ddot{x}_g$ is the excitation signal.
The mass $m$ is known to be 0.664 kg, and
the four parameters $c_{\nu}$, $c_f$, $k$ and $z$ need to be identified.
For more information on the experimental setup and dataset, refer to section 2 and Table 1 in \citep{lund2020identification}.

We follow approach B, proposed by the authors, to process two datasets simultaneously.
This is achieved by stacking two independent governing equations using Eq.~(\ref{nes}), where each governing equation corresponds to one dataset.
We denote the displacement and velocity of the nonlinear energy sink device in the first experiment by $x_1$ and $x_2$, and in the second experiment by $x_3$ and $x_4$.
The two experiments share the same four parameters, so the four-dimensional state space model is:
\begin{align*}
    \dot{x}_1 
    &= x_2,
    \quad
    \dot{x}_2 
    =
     -
     \frac{1}{m}
     \left(
     c_{\nu}x_2+c_f \tanh{(200 x_2)}+kx_1 +zx_1^3
     \right)
    - \Ddot{x}_{g, 1},
    \\
    \dot{x}_3 
    &= x_4,
    \quad
    \dot{x}_4 
    =
     -
     \frac{1}{m}
     \left(
     c_{\nu}x_4+c_f \tanh{(200 x_4)}+kx_3 +zx_3^3
     \right)
    - \Ddot{x}_{g, 2}.
\end{align*}
The two measurements are the displacement and relative acceleration of the nonlinear energy sink device:
\begin{align*}
    y_1 
    &= x_1,
    \quad
    y_2 =
    -
     \frac{1}{m}
     \left(
     c_{\nu}x_2+c_f \tanh{(200 x_2)}+kx_1 +zx_1^3
     \right),
     \\
     y_3 &= x_3,
     \quad
     y_4 =
      -
     \frac{1}{m}
     \left(
     c_{\nu}x_4+c_f \tanh{(200 x_4)}+kx_3 +zx_3^3
     \right).
\end{align*}

Since the measurement data were collected at 4096 Hz, we subsample only at these discrete time points when running NIFF to evaluate the physics-informed conditional prior, rather than sampling time uniformly as in \qref{logpriortarget}.

The two training experimental datasets are shown in Fig.~\ref{nes y posterior} in blue.
The left column displays the entire 90-second data.
The top two rows show the displacement and acceleration of the nonlinear energy sink under a sine sweep excitation signal with a 5 Hz frequency and a 2.7 mm maximum amplitude.
The bottom two rows show the displacements and accelerations under a sine excitation signal with a 5 Hz frequency and a variable amplitude that increases and then decreases, peaking at 2.7 mm.
Due to the rapid signal variations, we use 4000 radial basis terms for each signal without employing a residual function.
We present only the NSVI result, as NPSGLD converges extremely slowly, i.e., several hours, for this example due to the large number of unknown parameters. 

Fig.~\ref{nes y posterior} plots the posterior distributions of displacements and accelerations.
Since the details of the entire signals in the left column are not visible, we present zoomed-in regions in the right column.
The posterior distribution aligns well with the measurement data.
Fig.~\ref{nes para post} plots the posterior distribution of the four model parameters in the bottom row and their convergence speeds in the top row.
Our identified parameter values are similar to the values reported in Tables 2, 3, and 4 of \citep{lund2020identification}.
For example, in Table 4, the author reported one set of estimated values 
$
c_{\nu}=0.344\ \text{Ns/m},
c_f = 0.064\ \text{N},
k=33.1\ \text{Ns/m},
$
and 
$z = 6.54\times 10^5 \ \text{N/}\text{m}^3$.
Finally, we evaluate the posterior predictive distributions using four different experimental datasets.
Fig.~\ref{nes postpred} summarizes the prediction results, with each row corresponding to one experimental dataset.
The first column shows displacements, and the second column shows accelerations.
To account for randomness in the posterior predictions, we apply a modified version of the normalized mean square error (MSE) indicator~\citep{worden1990data} used in \citep{lund2020identification}:
\begin{align*}
    MSE =
    \E{\hat{X}_{1:N}, \hat{\Ddot{X}}_{1:N}}
    {
    \frac{100}{N}
    \sum_{i=1}^N
    \left(
    \frac{
    (x_i-\hat{X}_i)^2
    }{
    \sigma^2_d
    }
    +
    \frac{
    (\Ddot{x}_i-\hat{\Ddot{X}}_i)^2
    }{
    \sigma^2_a
    }
    \right)
    },
\end{align*}
where $\hat{X}_{1:N}$ and $\hat{\Ddot{X}}_{1:N}$ are the posterior predictive displacements and accelerations at $N$ sampling times,
$x_i$ and $\Ddot{x}_i$ are the measurement data, and $\sigma_d^2=1.44\times 10^{-12}$ m$^2$ and $\sigma_a^2=5.32\times 10^{-3}$ (m/s$^2$)$^2$ are the measurement variances chosen from \citep{lund2020identification}.
Table.~\ref{mse} summarizes the normalized MSE values for the four validation experimental datasets.
These values are similar in magnitude to those in Tables 2, 3, and 4 of~\citep{lund2020identification}, where reported normalized MSE values range from $10^8$ to $10^9$.

\begin{figure}[!htbp]
\centering
  \includegraphics[scale=0.43]{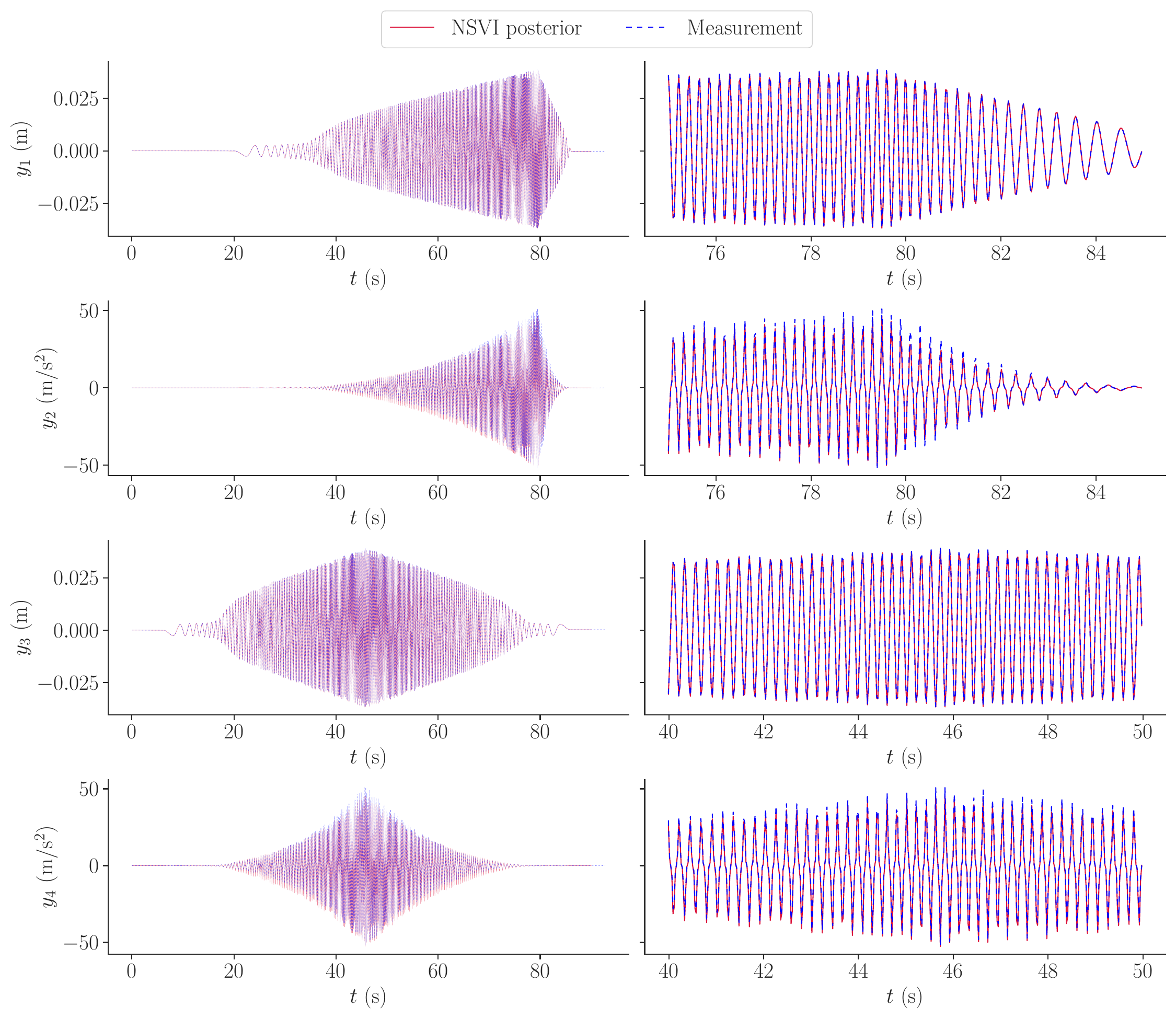}
  \caption{Example of section~\ref{section:nes}: NSVI posterior distribution. The left column includes entire dataset results, and the right column is a zoomed-in view.}
  \label{nes y posterior}
  \includegraphics[scale=0.43]{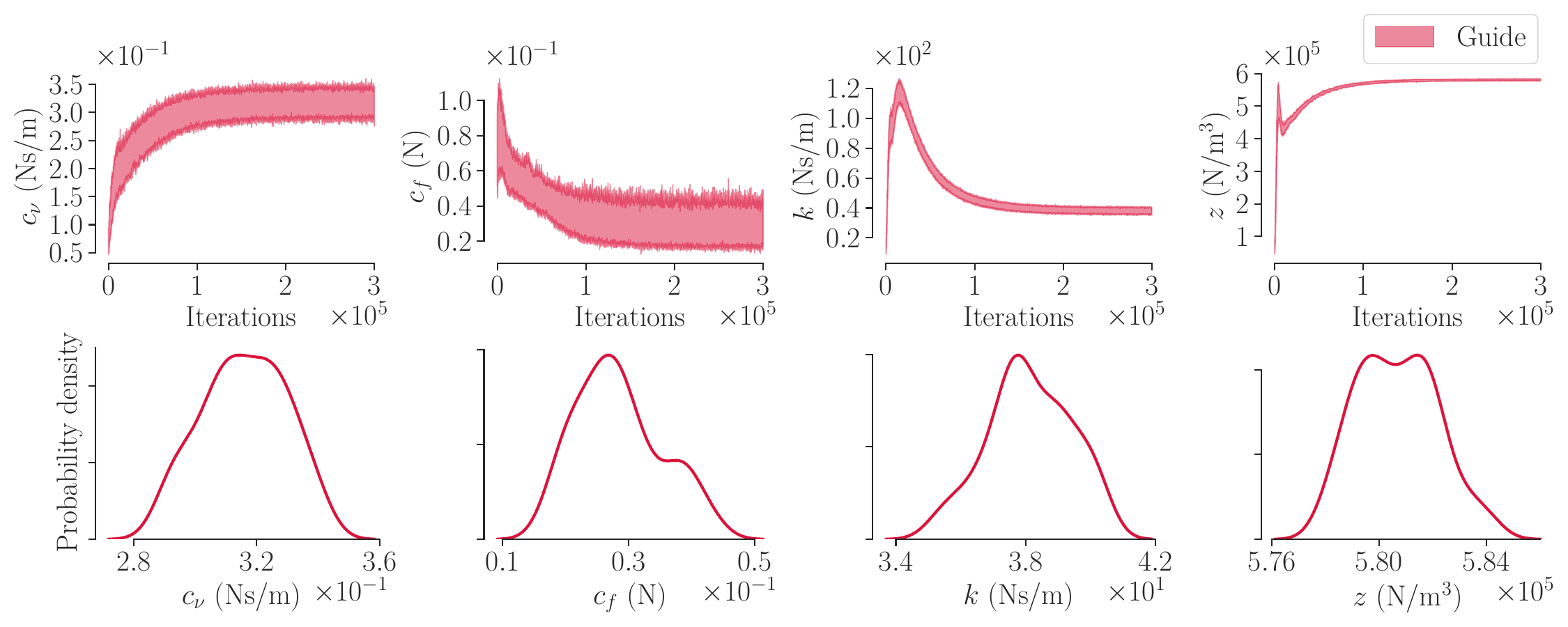}
  \caption{Example of section~\ref{section:nes}: NSVI posterior distribution of model parameters and the convergence speed.}
  \label{nes para post}
\end{figure}

\begin{figure}[!h]
\centering
  \includegraphics[scale=0.43]{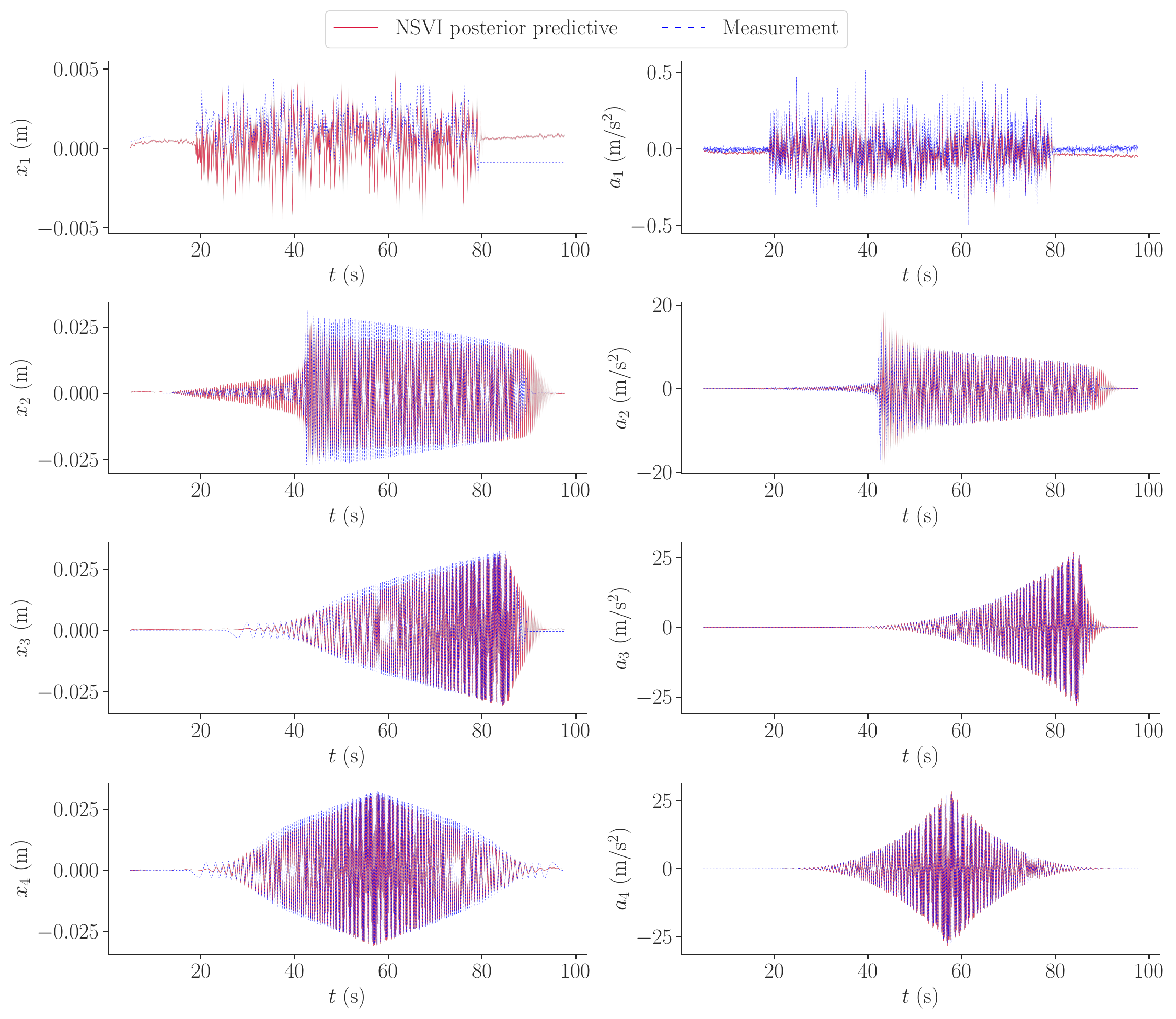}
  \caption{Example of section~\ref{section:nes}: NSVI posterior predictive distribution.}
  \label{nes postpred}
\end{figure}

\begin{table}[!htbp]
\caption{MSE for posterior predictive validation.}
\centering
\label{mse}
\begin{tabular}{ccccc}
\toprule[0.5mm]
& 
Signal 1
&
Signal 2
& 
Signal 3
& 
Signal 4
\\
\midrule[0.25mm]
MSE/10$^8$
&
1.6
&
16.9
&
8.1
&
1.7
\\
\bottomrule[0.5mm]
\end{tabular}
\end{table}
\section{Conclusions}
\label{sec: conclusions}

We have developed the neural information field filter for Bayesian estimation of states and parameters in dynamical systems.
NIFF improves parameterization expressiveness and reconstruction accuracy by representing the dynamical state path with a residual neural network.
To this end, we introduced a generalized physics-informed conditional prior, incorporating a kernel information Hamiltonian that measures the similarity between the initial state of the parameterized state path and an auxiliary initial state.
We showed that the physics-informed conditional prior defined in \citep{hao2024information} is a special case of this generalized physics-informed conditional prior when a Dirac kernel is used.
To sample from the posterior distribution, we developed an optimization algorithm, nested stochastic variational inference, and a sampling algorithm, nested preconditioned stochastic gradient Langevin dynamics.
We conducted three synthetic examples and one experimental example.
In the first example, using a single-degree-of-freedom Duffing oscillator, we verified that our non-reparameterized state path function approach produced similar results to the reparameterized approach~\citep{hao2024information}.
In the second example, using a two-degree-of-freedom nonlinear system~\citep{kong2022non}, we demonstrated that adding a residual function significantly improved reconstruction accuracy.
In the third synthetic example, we tested NIFF’s performance on a high-dimensional, twenty-story frame structure model, with both numerical algorithms yielding accurate results.
Last, we successfully validated NIFF using a nonlinear energy sink experimental example.
In summary, NIFF provides a powerful and flexible framework for Bayesian estimation in dynamical systems.

\section*{Acknowledgments}
This work was supported by a Space Technology Research Institutes Grant (number 80NSSC19K1076) from NASA’s Space Technology Research Grants Program. 

\section*{Declaration of generative AI and AI-assisted technologies in the writing process}
During the preparation of this work the authors used Grammarly and Chat GPT in order to correct spelling, grammatical, and syntactical errors. After using this tool/service, the authors reviewed and edited the content as needed and take full responsibility for the content of the publication.

\appendix
\section{Proof of Proposition~\ref{prop1}}
\label{appendix:proof1}
\begin{proof}
First we show $p(w_{-i}\vert x_0, \theta)=
\Tilde{p}(w_{-i}\vert x_0, \theta)$.
From the definition of $p(w\vert x_0, \theta)$, we have
    \begin{align*}
        p(w\vert x_0, \theta)
        &=
        \dfrac{
            e^{
            -\beta H(w,  \theta)
            -H(\hat{x}(0;w), x_0)
        }
        }{
        \int dw^{\prime}\
        e^{
            -\beta H(w^{\prime},  \theta)
            -H(\hat{x}(0;w^{\prime}), x_0)
            }
        }
        \\
        &=
        \dfrac{
            \delta(
                w_i-\mathcal{T}(x_0;w_{-i})
            )
            e^{
                -\beta H(w, \theta)
            }
        }{
        \int dw^{\prime}_{-i}
        \int dw^{\prime}_i\
        \delta(
        w_i^{\prime}-\mathcal{T}(x_0;w_{-i}^{\prime})
        )
        e^{
            -\beta H(w^{\prime},  \theta)
        }
        }
        \\
        &=
        \dfrac{
            \delta(
                w_i-\mathcal{T}(x_0;w_{-i})
            )
            e^{
            -\beta H(w, \theta)
            }
        }{
        \int dw^{\prime}_{-i}\
        e^{
            -\beta H(
            w^{\prime}_{-i},
            \mathcal{T}(x_0; w^{\prime}_{-i}),
            \theta)
            )
        }
        }.
    \end{align*}
    Then, we have
    \begin{align*}
        p(w_{-i}\vert x_0, \theta)
        &=
        \int dw_i \ p(w\vert x_0, \theta)
        \\
        &=
        \int 
        dw_i\
        \frac{
            \delta(
                w_i-\mathcal{T}(x_0;w_{-i})
            )
            e^{
            -\beta H(w, \theta)
            }
        }{
        \int dw^{\prime}_{-i}\
        e^{
            -\beta H(  
            w^{\prime}_{-i},
            \mathcal{T}(x_0; w^{\prime}_{-i}),
            \theta)
        }
        }
        \\
        &=
        \frac{
            e^{
            -\beta H(w_{-i}, \mathcal{T}(x_0; w_{-i}) ,\theta)
            }
        }{
        \int dw^{\prime}_{-i}\
        e^{
            -\beta H(  
            w^{\prime}_{-i},
            \mathcal{T}(x_0; w^{\prime}_{-i}),
            \theta)
        }
        }
        \\
        &=
        \frac{
            e^{
            -\beta \Tilde{H}(w_{-i}, x_0, \theta)
            }
        }{
        \int dw^{\prime}_{-i}\
        e^{
            -\beta \Tilde{H}(w_{-i}^{\prime}, x_0, \theta)
        }
        }
        \\
        &=
        \Tilde{p}_(w_{-i}\vert x_0, \theta).
    \end{align*}

    Next, we show $p(w\vert \theta)=\Tilde{p}(w\vert \theta)$.
    We need to use the composition with a function property of the Dirac function~\citep{gel2014generalized}, and derive
    \begin{align*}
        \delta(
                w_i-\mathcal{T}(x_0;w_{-i})
            )
        =
        \frac{
            \delta(x_0-
                \mathcal{T}^{-1}(w_i; w_{-i})
            )
        }{
            \abs{
                \frac{
                d\mathcal{T}(x;w_{-i})
                }{
                    dx
                }\big\vert_{x=\mathcal{T}^{-1}(w_i; w_{-i})}
            }
        }.
    \end{align*}
    We first work on the generalized physics-informed conditional prior:
    \begin{align*}
        p(w\vert \theta)
        &=
         \int dx_0 \
        p(w\vert x_0, \theta)
        p(x_0)
        \\
        &=
        \int dx_0\
        \frac{
            \delta(
                w_i-\mathcal{T}(x_0;w_{-i})
            )
            e^{
            -\beta H(w, \theta)
            }
        }{
        \int dw^{\prime}_{-i}\
        e^{
            -\beta H(
            w^{\prime}_{-i},  
            \mathcal{T}(x_0; w^{\prime}_{-i}),
            \theta)
        }
        }
        p(x_0)
        \\
        &=
        \int dx_0\
        \frac{
            \delta(x_0-
                \mathcal{T}^{-1}(w_i; w_{-i})
            )
        }{
            \abs{
                \frac{
                d\mathcal{T}(x;w_{-i})
                }{
                    dx
                }\big\vert_{x=\mathcal{T}^{-1}(w_i; w_{-i})}
            }
        }
        \frac{    
            e^{
            -\beta H(w, \theta)
            }
        }{
        \int dw^{\prime}_{-i}\
       e^{
            -\beta H(  
            w^{\prime}_{-i},
            \mathcal{T}(x_0; w^{\prime}_{-i}),
            \theta)
        }
        }
        p(x_0)
        \\
        &=
        \frac{1}{
         \abs{
                \frac{
                d\mathcal{T}(x;w_{-i})
                }{
                    dx
                }\big\vert_{x=\mathcal{T}^{-1}(w_i; w_{-i})}
            }
        }
        \frac{
        e^{
            -\beta H(
                w, \theta
            )
        }
        }{
             \int dw^{\prime}_{-i}\
            e^{
            -\beta H(  
            w^{\prime}_{-i},
            \mathcal{T}(
            \mathcal{T}^{-1}(w_i; w_{-i})
            ; w^{\prime}_{-i}
            ),
            \theta)
            }
        }
        p(\mathcal{T}^{-1}(w_i; w_{-i}))
        \\
        &=
        \frac{1}{
         \abs{
                \frac{
                d\mathcal{T}(x;w_{-i})
                }{
                    dx
                }\big\vert_{x=\hat{x}(0;w)}
            }
        }
        \frac{
        e^{
            -\beta H(
                w, \theta
            )
        }
        }{
             \int dw^{\prime}_{-i}\
            e^{
            -\beta H(  
            w^{\prime}_{-i},
            \mathcal{T}(
            \hat{x}(0;w)
            ; w^{\prime}_{-i}
            ),
            \theta)
            }
        }
        p(\hat{x}(0;w)).
    \end{align*}
    
    For the reparameterized version, we have $w_{i}= \mathcal{T}(x_0;w_{-i})$.
    Then, we define the map
    $
    \mathcal{G}:(w_{-i}, x_0) \mapsto (w_{-i}, \mathcal{T}(x_0; w_{-i}))
    $.
    \begin{align*}
        \Tilde{p}(w\vert \theta)
        &=
        \abs{
        \nabla_{w_{-i}, w_i}\ \mathcal{G}^{-1}(w_{-i}, w_i) 
        }
        \Tilde{p}\left(
        \mathcal{G}^{-1}
        (w_{-i}, w_i)
        \vert \theta
        \right)
        \\
        &=
        \abs{
        \frac{
            d\mathcal{T}^{-1}(w_i; w_{-i})
        }{
            dw_i
        }
        }
        \Tilde{p}
        \left(w_{-i}, \mathcal{T}^{-1}(w_i; w_{-i})\vert\theta
        \right)
        \\
        &=
        \abs{
        \frac{
            d\mathcal{T}^{-1}(w_i; w_{-i})
        }{
            dw_i
        }
        }
        \Tilde{p}
        \left(w_{-i}\vert \mathcal{T}^{-1}(w_i; w_{-i}), \theta
        \right)
        p(\mathcal{T}^{-1}(w_i; w_{-i}))
        \\
        &=
         \frac{1}{
         \abs{
                \frac{
                d\mathcal{T}(x;w_{-i})
                }{
                    dx
                }\big\vert_{x=\mathcal{T}^{-1}(w_i; w_{-i})}
            }
        }
         \Tilde{p}
         \left(w_{-i}\vert \mathcal{T}^{-1}(w_i; w_{-i}), \theta
         \right)
        p(\mathcal{T}^{-1}(w_i; w_{-i}))
        \\
        &=
        \frac{1}{
         \abs{
                \frac{
                d\mathcal{T}(x;w_{-i})
                }{
                    dx
                }\big\vert_{x=\mathcal{T}^{-1}(w_i; w_{-i})}
            }
        }
         \frac{
        e^{
            -\beta \Tilde{H}(
                w_{-i}, \mathcal{T}^{-1}(w_i; w_{-i}), \theta
            )
        }
        }{
             \int dw^{\prime}_{-i}\
            e^{
            -\beta \Tilde{H}( 
            w^{\prime}_{-i},
            \mathcal{T}^{-1}(w_i; w_{-i}),
            \theta)
            }
        }
        p(\mathcal{T}^{-1}(w_i; w_{-i}))
        \\
        &=
        \frac{1}{
         \abs{
                \frac{
                d\mathcal{T}(x;w_{-i})
                }{
                    dx
                }\big\vert_{x=x_0}
            }
        }
         \frac{
        e^{
            -\beta H(w_{-i}, w_i, \theta)
        }
        }{
             \int dw^{\prime}_{-i}
            e^{
            -\beta H( 
            w^{\prime}_{-i},
           \mathcal{T}(x_0; w^{\prime}_{-i}),
            \theta)
            }
        }p(x_0).
    \end{align*}
    Due to the reparameterization, we have $\hat{x}(0;w)= x_0$. So we have $p(w\vert \theta)=\Tilde{p}(w\vert \theta)$.
\end{proof}

\section{Proposition 2}
\label{appendix:proof2}
\begin{prop}
\label{prop2}
    Maximizing the ELBO \qref{posterior_elbo} is equivalent to minimizing an upper bound of the KL divergence
    $
    D_{\text{KL}}(
    q_{\phi}(w)q_{\psi}(\theta)
    \Vert
    p(w, \theta\vert y)
    )
    $.
\end{prop}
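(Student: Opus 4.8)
The plan is to recognize the objective in \qref{posterior_elbo} as the standard \emph{joint} evidence lower bound for the augmented model over $(w, \theta, x_0)$, and then to transfer the resulting equivalence at the joint level down to the marginal posterior $p(w, \theta\vert y)$ of \qref{marg_posterior} by means of the chain rule for KL divergence.

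First I would simplify the ELBO into recognizable form. Under the conditional-independence assumption the minibatch term is an unbiased estimator of the full log-likelihood, so that
\begin{align*}
    \E{p(\mathcal{I}_{m_d})}{\frac{n_d}{m_d}\sum_{i\in\mathcal{I}_{m_d}}\log p(y_i\vert w, \theta)}
    =
    \sum_{i=1}^{n_d}\log p(y_i\vert w, \theta)
    =
    \log p(y\vert w, \theta).
\end{align*}
Next, because $Z(x_0, \theta)$ depends only on $(x_0, \theta)$, its expectation under $q_{\psi}q_{\chi}$ equals its expectation under the full guide $q = q_{\phi}(w)q_{\psi}(\theta)q_{\chi}(x_0)$; I can therefore absorb the $-\log Z$ term into the logarithm and use $p(w\vert x_0, \theta) = \pi(w\vert x_0, \theta)/Z(x_0, \theta)$ to recover the normalized conditional prior. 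This identifies the objective with the joint ELBO
\begin{align*}
    \operatorname{ELBO}
    =
    \E{q}{\log \frac{p(y\vert w, \theta)\,p(w\vert x_0, \theta)\,p(\theta)\,p(x_0)}{q_{\phi}(w)q_{\psi}(\theta)q_{\chi}(x_0)}}
    =
    \E{q}{\log \frac{p(y, w, \theta, x_0)}{q(w, \theta, x_0)}}.
\end{align*}

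With this identification the classical decomposition $\log p(y) = \operatorname{ELBO} + D_{\text{KL}}\left(q \Vert p(w, \theta, x_0\vert y)\right)$ holds, and since $\log p(y)$ is constant in the guide parameters, maximizing the ELBO is equivalent to minimizing the joint KL divergence. To pass to the marginal posterior I would then invoke the chain rule for KL divergence,
\begin{align*}
    D_{\text{KL}}\left(q \Vert p(w, \theta, x_0\vert y)\right)
    =
    D_{\text{KL}}\left(q_{\phi}(w)q_{\psi}(\theta) \Vert p(w, \theta\vert y)\right)
    +
    \E{q_{\phi}q_{\psi}}{D_{\text{KL}}\left(q_{\chi}(x_0) \Vert p(x_0\vert w, \theta, y)\right)},
\end{align*}
where I use that the guide factorizes, so its $(w,\theta)$-marginal is exactly $q_{\phi}q_{\psi}$ and its conditional given $(w,\theta)$ is $q_{\chi}$. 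Since the second term is nonnegative, the joint KL divergence upper bounds the marginal one, and minimizing the former minimizes this upper bound, which is the claim.

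The main obstacle is the bookkeeping in the first step: verifying that the paper's split presentation of the ELBO---with the subsampling estimator and the partition function pulled out into a separate expectation over $q_{\psi}q_{\chi}$---genuinely reassembles into the joint ELBO of the \emph{normalized} augmented model, including the cancellation that turns $\pi$ and $Z$ back into $p(w\vert x_0,\theta)$. Once that identity is in hand, the remaining content is purely the chain rule, whose nonnegative conditional remainder is precisely what upgrades the exact equivalence at the joint level into an \emph{upper bound} at the marginal level.
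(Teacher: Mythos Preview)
Your proposal is correct and follows essentially the same approach as the paper: both identify the objective with the joint ELBO over $(w,\theta,x_0)$ and then decompose $\log p(y)$ into $\operatorname{ELBO} + D_{\text{KL}}(q_{\phi}q_{\psi}\Vert p(w,\theta\vert y)) + \E{q_{\phi}q_{\psi}}{D_{\text{KL}}(q_{\chi}\Vert p(x_0\vert w,\theta,y))}$, using nonnegativity of the last term. The only cosmetic difference is that the paper carries out this decomposition in a single chain of equalities, whereas you factor it through the intermediate joint KL $D_{\text{KL}}(q\Vert p(w,\theta,x_0\vert y))$ and then apply the chain rule.
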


\begin{proof}
We work with the following compact form of the ELBO:
\begin{align*}
    \operatorname{ELBO}(\phi, \psi, \chi\vert y)
    =
    \E{q_{\phi}(w)q_{\psi}(\theta) q_{\chi}(x_0)}{
        \log
        \left\{
        \dfrac{
        p(y\vert w, \theta)
        p(w\vert x_0, \theta) p(x_0, \theta)
        }{
        q_{\phi}(w)q_{\psi}(\theta) q_{\chi}(x_0)
        }
        \right\}
        }.
\end{align*}
It is easy to see this form is equivalent to \qref{posterior_elbo}.

We decompose the log evidence into three terms:
\begin{align*}
    \log p(y)
    &=
    \E{
        q_{\phi}(w) q_{\psi}(\theta) q_{\chi}(x_0)
    }{
    \log p(y)
    }
    \\
    &=
    \E{
        q_{\phi}(w) q_{\psi}(\theta) q_{\chi}(x_0)
    }{
        \log\left\{
        \dfrac{
            p\left(y, w, \theta, x_0\right)
        }{
            p\left(w, \theta, x_0\vert y\right)
        }
        \right\}
    }
    \\
    &=
    \E{
        q_{\phi}(w) q_{\psi}(\theta)q_{\chi}(x_0)
    }{
        \log\left\{
        \dfrac{
            p\left(y, w, \theta, x_0\right)
            q_{\phi}(w)q_{\psi}(\theta) q_{\chi}(x_0)
        }{
            p\left(w, \theta, x_0\vert y\right)
            q_{\phi}(w) q_{\psi}(\theta) q_{\chi}(x_0)
        }
        \right\}
    }
    \\
    &=
    \E{
        q_{\phi}(w)q_{\psi}(\theta)q_{\chi}(x_0)
    }{
        \log\left\{
        \dfrac{
            q_{\phi}(w)q_{\psi}(\theta) q_{\chi}(x_0)
        }{
            p\left(w, \theta, x_0\vert y\right)
        }
        \right\}
    }
    +
    \E{
        q_{\phi}(w)q_{\psi}(\theta)q_{\chi}(x_0)
    }{
        \log\left\{
        \dfrac{
            p\left(y, w, \theta, x_0\right)
        }{
            q_{\phi}(w) q_{\psi}(\theta) q_{\chi}(x_0)
        }
        \right\}
    }
    \\
    &=
    \E{
        q_{\phi}(w)q_{\psi}(\theta)
    }{
        \log\left\{
        \dfrac{
            q_{\phi}(w)q_{\psi}(\theta)
        }{
            p\left(w, \theta \vert y\right)
        }
        \right\}
    }
    +
    \E{
        q_{\phi}(w)q_{\psi}(\theta)q_{\chi}(x_0)
    }{
        \log\left\{
        \dfrac{
            q(x_0)
        }{
            p\left(x_0\vert w, \theta,y\right)
        }
        \right\}
    }
    +
    \operatorname{ELBO}(\phi, \psi, \chi\vert y)
    \\
    &=
    \operatorname{KL}\left[
    q_{\phi}(w)q_{\psi}(\theta)
    \Vert
    p(w, \theta \vert y)
    \right]
    +
    \E{q_{\phi}(w)q_{\psi}(\theta)}{
    \operatorname{KL}\left[
        q(x_0)\Vert
        p(x_0\vert w, \theta, y)
    \right]
    }
    +
    \operatorname{ELBO}(\phi, \psi, \chi\vert y).
\end{align*}
Since $\log p(y)$ is a constant, and 
$\E{q_{\phi}(w)q_{\psi}(\theta)}{
    \operatorname{KL}\left[
        q(x_0)\Vert
        p(x_0\vert w, \theta, y)
    \right]
    }$
is non-negative, we conclude the equivalence.
\end{proof}

\section{NSVI implementation details}
\label{appendix:nsvi}

We write the information Hamiltonian $H_1(w, \theta)$ using expectation, so it can be estimated using Monte Carlo methods. 
Let
\begin{align*}
    h_1(w, \theta, t)
    =
    \lVert
     \dot{\hat{x}}(t;w)-f(\hat{x}(t;w), t;\theta) \lVert^2,
\end{align*}
and $t$ follow the uniform distribution $p(t) = \mathcal{U}([0, T])$, we write
\begin{align*}
    H_1(w, \theta) = T\E{p(t)}
{h_1(w, \theta, t)}.
\end{align*}
So the log unnormalized relaxed physics-informed conditional prior is 
\begin{align}
\label{logpriortarget}
    \log \pi(w\vert x_0, \theta)
    =
    -\beta_1 T 
    \E{
    p(t)
    }
    {
    h_1(w, \theta, t)
    }
    -
    \beta_2 H_2(\hat{x}(0;w), x_0).
\end{align}

\subsection{SVI to approximate the relaxed physics-informed conditional prior $p(w\vert x_0, \theta)$}

We first describe the inner loop auxiliary stochastic variational inference.
We parameterize the guide $q_{\phi}(w)$ with the parameter $\phi$ to approximate the relaxed physics-informed conditional prior
$p(w\vert x_0, \theta)$ by maximizing the prior ELBO:
\begin{align*}
    \operatorname{ELBO}(\phi\vert x_0, \theta)
    =
    \E{
    q_{\phi}(w)
    }{
    \log
    \frac{
    \pi(w\vert x_0, \theta)
    }{
    q_{\phi}(w)
    }
    }.
\end{align*}
The log unnormalized relaxed physics-informed conditional prior is defined in \qref{logpriortarget}.

Applying Eq.~(\ref{logpriortarget}), the equivalent prior ELBO is
\begin{align*}
    \operatorname{ELBO}(\phi\vert x_0, \theta)
    =
    \E{
    q_{\phi}(w)
    p(t)
    }{
    -T\beta_1
    h(w, \theta, t)
    -\beta_2 H_2(\hat{x}(0;w), x_0)
    -\log q_{\phi}(w)
    }.
\end{align*}

Maximizing the prior ELBO requires computing its gradient with respect to the variational parameter $\phi$.
We apply the reparameterization trick~\citep{kingma2013auto}.
Specifically, we choose a base distribution $q(\epsilon)$ and a deterministic transformation $g_{\phi}$ parameterized by the same variational parameter $\phi$ such that
$g_{\phi}(\epsilon)\sim q_{\phi}$.
For more details, refer to section 2.4.4 in \citep{hao2024information}. 
Then the reparameterized ELBO is
\begin{align*}
    \operatorname{ELBO}(\phi\vert x_0, \theta)
    =
    \E{
    q(\epsilon)
    p(t)
    }{
    -T\beta_1
    h(g_{\phi}(\epsilon), \theta, t)
    -\beta_2 H_2(\hat{x}(0;g_{\phi}(\epsilon)), x_0)
    -\log q_{\phi}(g_{\phi}(\epsilon))
    }.
\end{align*}
Its gradient with respect to the variational parameter $\phi$ is
\begin{align*}
    \nabla_{\phi}
    \operatorname{ELBO}(\phi\vert x_0, \theta)
    =
    \E{
    q(\epsilon)p(t)
    }{
    \nabla_{\phi}
    \left[
    -T\beta_1
    h(g_{\phi}(\epsilon), \theta, t)
    -\beta_2 H_2(\hat{x}(0;g_{\phi}(\epsilon)), x_0)
    -\log q_{\phi}(g_{\phi}(\epsilon))
    \right]
    }.
\end{align*}
Let $\epsilon_i$ and $t_{ij}$ be the random samples from $q(\epsilon)$ and $p(t)$, an unbiased estimator of this gradient is
\begin{align}
\label{grad_elbo_prior}
    \widehat{
    \nabla_{\phi}
    \operatorname{ELBO}(\phi\vert x_0, \theta)
    }
    =
    \frac{1}{N_{\epsilon}N_t}
    \sum_{i=1}^{N_{\epsilon}}
    \sum_{j=1}^{N_t}
    \nabla_{\phi}
    \left[
    -T\beta_1
    h(g_{\phi}(\epsilon_i), \theta, t_{ij})
    -\beta_2 H_2(\hat{x}(0;g_{\phi}(\epsilon_i)), x_0)
    -\log q_{\phi}(g_{\phi}(\epsilon_i))
    \right].
\end{align}

We summarize the algorithm in Algorithm~\ref{alg:svi_prior}.

\RestyleAlgo{ruled} 
\SetKwInput{KwInit}{Initialization}
\SetKwInput{KwInput}{Input}
\SetKwInput{KwReturn}{Return}
\SetKwInput{Kwalg}{SVI\_PRIOR}
\SetKwComment{Comment}{$\triangleright$}{}
\begin{algorithm}[tb]
\caption{
SVI approximation to $p\left(w\vert x_0, \theta \right)$.
}\label{alg:svi_prior}
\Kwalg{
(
\begin{tabbing}
\hspace{1em} \= $x_0 $, \hspace{2em} \=
 \# \textit{The auxiliary initial state.}
 \\
\> $\theta$, \> \# \textit{Model parameter.} 
\\
\> $\phi$, \> \# \textit{Initial variational parameter.}
\\
\> niter, \> \# \textit{The number of optimization iterations.}
\\
\> $(n_{\Tilde{\epsilon}}, n_{\Tilde{t}})$, \> \# \textit{Sample sizes.}
\\
)
\end{tabbing}
}
 \For{$\text{it}=0; \text{it}<\text{niter}; \text{it}=\text{it}+1$}{  
 Sample $\epsilon_i$ independently from $q(\epsilon)$\;
 Sample $t_{ij}$ independently from $\mathcal{U}([0,T])$\;
 Compute 
 $
  \widehat{
    \nabla_{\phi}
    \operatorname{ELBO}(\phi\vert x_0, \theta)
    }
 $ 
 using \qref{grad_elbo_prior}
    \;
  Update $\phi$ using Adam with the gradient estimate 
  $
  \widehat{
    \nabla_{\phi}
    \operatorname{ELBO}(\phi\vert x_0, \theta)
    }
 $.
} 
 \KwReturn{
 $
 \phi
 $
 \hspace{2em}
 \#
 \textit{Final variational parameter.}
 }
\end{algorithm}

\subsection{NSVI to approximate the marginal posterior $p(w, \theta\vert y)$}

As before, we apply the reparameterization trick to the ELBO defined in \qref{posterior_elbo}.
Let $(q(\epsilon), q(\eta), q(\zeta))$ be the base distributions and $(g_{\phi}, g_{\psi}, g_{\eta})$ be the corresponding deterministic transformations such that
$g_{\phi}(\epsilon)\sim q_{\phi}$, 
$g_{\psi}(\eta)\sim q_{\psi}$,
and $g_{\chi}(\zeta)\sim q_{\chi}$.
Then, the reparameterized ELBO is
\begin{equation}
\label{appendix_elbo}
\begin{aligned}
    \operatorname{ELBO}(\phi, \psi, \chi\vert y)
    =
    &+
    \E{q(\epsilon)q(\eta)
    p(\mathcal{I}_{m_d})
    }{
    \frac{n_d}{m_d}
    \sum_{i\in\mathcal{I}_{m_d}}
    \log p(y_i\vert g_{\phi}(\epsilon), g_{\psi}(\eta))
    }
    \\
    &+
    \E{q(\epsilon)q(\eta)q(\zeta)}
    {
   \log \pi(g_{\phi}(\epsilon)\vert g_{\chi}(\zeta), g_{\psi}(\eta))
    }
    \\
    &+
    \E{q(\eta)q(\zeta)}{
     \log
        p(g_{\chi}(\zeta), g_{\psi}(\eta))
    }
    \\
    &-
    \E{q(\epsilon)q(\eta)q(\zeta)}{
    \log  q_{\phi}(g_{\phi}(\epsilon))
        q_{\psi}(g_{\psi}(\eta))
        q_{\chi}(g_{\chi}(\zeta))
    }
    \\
    &- \E{q(\eta)q(\zeta)}{
    \log
    Z(g_{\chi}(\zeta), g_{\psi}(\eta))
    }
    \end{aligned}.
\end{equation}

Calculating the gradient of ELBO requires to differentiate through the log partition function, which has the formula~\citep{hao2024information}:
\begin{equation}
\label{gradient_conditional_prior}
\begin{aligned}
    \nabla_{\psi, \chi}
   \log
    Z(g_{\chi}(\zeta), g_{\psi}(\eta))
    =
    \E{
    p(\tilde{w}\vert  g_{\chi}(\zeta), g_{\psi}(\eta))
    }{
    \nabla_{\psi, \chi}
    \log\left[
    \pi(
    \Tilde{w}\vert g_{\chi}(\zeta), g_{\psi}(\eta)
    )
    \right]
    }.
\end{aligned}
\end{equation}
To evaluate it, we need to sample from the relaxed physics-informed conditional prior $p(\tilde{w}\vert  g_{\chi}(\zeta), g_{\psi}(\eta))$.
We will use Algorithm~\ref{alg:svi_prior} to build a surrogate sampler.

Applying \qref{logpriortarget} to \qref{gradient_conditional_prior} and substitute the result into the ELBO \qref{appendix_elbo}, the gradient of ELBO is
\begin{equation*}
\begin{aligned}
    \nabla_{\phi, \psi, \chi}\operatorname{ELBO}(\phi, \psi, \chi\vert y)
    =
    &+
    \E{q(\epsilon)q(\eta)
    p(\mathcal{I}_{m_d})
    }{
    \frac{n_d}{m_d}
    \sum_{i\in\mathcal{I}_{m_d}}
    \nabla_{\phi, \psi}
    \log p(y_i\vert g_{\phi}(\epsilon), g_{\psi}(\eta))
    }
    \\
    &-
    \E{q(\epsilon)q(\eta)q(\zeta)p(t)}
    {
        \nabla_{\phi, \psi, \chi}
        \left[
        T\beta_1
        h(g_{\phi}(\epsilon), g_{\psi}(\eta), t)
        +
        \beta_2 H_2(\hat{x}(0;g_{\phi}(\epsilon)), g_{\chi}(\zeta))
        \right]
    }
    \\
    &+
    \E{q(\eta)q(\zeta)}{
    \nabla_{\psi, \chi}
     \log
        p(g_{\chi}(\zeta), g_{\psi}(\eta))
    }
    \\
    &-
    \E{q(\epsilon)q(\eta)q(\zeta)}{
    \nabla_{\phi, \psi, \chi}
    \log  q_{\phi}(g_{\phi}(\epsilon))
        q_{\psi}(g_{\psi}(\eta))
        q_{\chi}(g_{\chi}(\zeta))
    }
    \\
    &+ \E{
    q(\eta)q(\zeta)
    p(\tilde{w}\vert g_{\chi}(\zeta), g_{\psi}(\eta))
    p(\tilde{t})
    }{
   \nabla_{\psi, \chi}
        \left[
        T\beta_1
        h(\Tilde{w}, g_{\psi}(\eta), \Tilde{t})
        +
        \beta_2 H_2(\hat{x}(0;\Tilde{w}), g_{\chi}(\zeta))
        \right]
    }
    \end{aligned}.
\end{equation*}

To build an unbiased estimator for the ELBO's gradient, we sample $(\epsilon_i, \eta_i, \zeta_i)$ from $(q(\epsilon), q(\eta), q(\zeta))$, $t_{ij}$ from $p(t)$, $\tilde{w}_{ik}$ from $p(\Tilde{w}\vert g_{\chi}(\zeta_{i}), g_{\psi}(\eta_i))$, $\Tilde{t}_{ikl}$ from $p(\tilde{t})$, and only subsample one dataset with the index set $I_{m_d}$ to get
\begin{equation}
\label{grad_estimator_elbo_post}
\begin{aligned}
    \widehat{
    \nabla_{\phi, \psi, \chi}
     \operatorname{ELBO}(\phi, \psi, \chi\vert y)
    }
    =
    &+
    \frac{n_d}{
    N_{\epsilon\eta\zeta}m_d
    }
    \sum_{i=1}^{N_{\epsilon\eta\zeta}}
        \sum_{n\in I_{m_d}}
        \nabla_{\phi, \psi}
        \log 
        p(
        y_n\vert 
        g_{\phi}(\epsilon_i),
        g_{\psi}(\eta_i)
        )
    \\
    &-
    \frac{1}{
    N_{\epsilon\eta\zeta}
    N_t
    }
    \sum_{i=1}^{N_{\epsilon\eta\zeta}}
    \sum_{j=1}^{N_t}
        \nabla_{\phi, \psi, \chi}
        \left[
        T\beta_1
        h(g_{\phi}(\epsilon_i), g_{\psi}(\eta_i), t_{ij})
        +
        \beta_2 H_2(\hat{x}(0;g_{\phi}(\epsilon_i)), g_{\chi}(\zeta_i))
        \right]
    \\
    &+
    \frac{1}{
    N_{\epsilon\eta\zeta}
    }
    \sum_{i=1}^{N_{\epsilon\eta\zeta}}
    \nabla_{\psi, \chi}
        \log
        p(g_{\chi}(\zeta_i), g_{\psi}(\eta_i))
    \\
    &-
    \frac{1}{
    N_{\epsilon\eta\zeta}
    }
    \sum_{i=1}^{N_{\epsilon\eta\zeta}}
    \nabla_{\phi, \psi, \chi}
        \log\left[
            q_{\phi}(g_{\phi}(\epsilon_i))
            q_{\psi}(g_{\psi}(\eta_i))
            q_{\chi}(g_{\chi}(\zeta_i))
        \right]
    \\
    &+
    \frac{1}{
    N_{\epsilon\eta\zeta}
    N_{\Tilde{w}}
    N_{\Tilde{t}}
    }
    \sum_{i=1}^{N_{\epsilon\eta\zeta}}
    \sum_{k=1}^{N_{\Tilde{w}}}
    \sum_{l=1}^{N_{\Tilde{t}}}
    \nabla_{\psi, \chi}
        \left[
        T\beta_1
        h(\Tilde{w}_{ik}, g_{\psi}(\eta_i), \Tilde{t}_{ikl})
        +
        \beta_2 H_2(\hat{x}(0;\Tilde{w}_{ik}), g_{\chi}(\zeta_i))
        \right]
\end{aligned}   
\end{equation}

To compute this estimator, we have to draw $\Tilde{w}_{ik}$ from the relaxed physics-informed conditional prior $p(\Tilde{w}\vert g_{\chi}(\zeta_{i}), g_{\psi}(\eta_i))$.
We run Algorithm~\ref{alg:svi_prior} to get surrogate samples with the optimized guide 
$q_{\Tilde{\phi}_i}(\Tilde{w})$.
We call this inner loop auxiliary stochastic variational inference.
For the computational efficiency purpose, we draw only one sample from $q(\epsilon)$, $q(\eta)$, and $q(\zeta)$, i.e., $N_{\epsilon\eta\zeta}=1$.
Additionally, we run Algorithm~\ref{alg:svi_prior} in a non-convergent and persistent manner.
This means that instead of running the inner loop auxiliary algorithm to full convergence, we perform only a few updates, e.g., ten steps, and initialize the inner loop auxiliary algorithm at the next optimization iteration using the trained auxiliary guide parameter at the current iteration step.

Algorithm~\ref{alg:svi_post} summarizes the steps to approximate the marginal posterior distribution.

\RestyleAlgo{ruled} 
\SetKwInput{KwInit}{Initialization}
\SetKwInput{KwInput}{Input}
\SetKwInput{KwReturn}{Return}
\SetKwInput{Kwalg}{NSVI\_POSTERIOR}
\SetKwComment{Comment}{$\triangleright$}{}
\begin{algorithm}[tb]
\caption{
NSVI approximation to $p\left(w, \theta \vert y\right)$.
}\label{alg:svi_post}
\Kwalg{
(
\begin{tabbing}
\hspace{1em} \= $\left(\phi, \psi, \chi, \Tilde{\phi}\right)$, \hspace{4em} \=
 \# \textit{Variational parameters.}
\\
\> (niter, niter\_auxi), \> \# \textit{The number of optimization iterations.}
\\
\> $(n_{\epsilon\eta\zeta},n_t,  \tilde{n}_\epsilon, \tilde{n}_t, m_y)$, \> \# \textit{Sample sizes.}
\\
)
\end{tabbing}
}
 \For{$\text{it}=0; \text{it}<\text{niter}; \text{it}=\text{it}+1$}{
 Sample $(\epsilon_1. \eta_1, \zeta_1) \sim q$,
 \hspace{2em}
 \# Assuming $n_{\epsilon\eta\zeta} = 1$ for computational efficiency.
 \\
 Compute $
 (\theta_1, x_{0, 1}) = 
 (
 g_{\psi}\left(\eta_1\right)
 ,
 g_{\chi}(\zeta_1)
 )
 $
\;
$\Tilde{\phi}\gets$ \textbf{SVI\_PRIOR}
$
\left(
x_{0, 1},
\theta_{1},
\tilde{\phi},
\text{niter\_auxi},
\left(\tilde{n}_\epsilon, \tilde{n}_t\right)
\right),
$
 \hspace{2em}
\# Run \textbf{Algorithm} \ref{alg:svi_prior}.
\\
Sample $\tilde{w}_{1k}$ independently from $q_{\tilde{\phi}}$\;
Sample $t_{1j}$ independently from $\mathcal{U}([0,T])$\;
Sample $\Tilde{t}_{1kl}$ independently from $\mathcal{U}([0,T])$\;
Compute 
$ \widehat{
    \nabla_{\phi, \psi, \chi}
    \operatorname{ELBO}\left(
        \phi, \psi, \chi \vert y
        \right)
    }
$ using \qref{grad_estimator_elbo_post}\;
Update $(\phi, \psi, \chi)$ using Adam with the gradient estimate 
$
\widehat{
    \nabla_{\phi, \psi, \chi}
    \operatorname{ELBO}\left(
        \phi, \psi, \chi \vert y
        \right)
    }
$.
 }
 \KwReturn{
$\left(q_{\phi}(w), q_{\psi}(\theta)\right)$
 \hspace{2em}
 \#
 \textit{Approximate marginal posterior distribution.}
 }
\end{algorithm}

\section{NPSGLD implementation details}
\label{appendix: npsgld}

In our numerical experiments, the adaptively-updated rules in Eqs. (\ref{RMSprop_prior}) and (\ref{RMSprop2}) generally accelerate convergence during the initial phase of MCMC sampling, where larger update step sizes are beneficial.
However, as the MCMC chains approach the target probability density modes, smaller, stable update step sizes become necessary, and the adaptive steps can introduce instability.
To address this, we gradually anneal the memory size parameter $\alpha$ to 1.
Numerically, we create a monotonically increasing vector $\boldsymbol{\alpha}$, which anneals the memory size parameter over a user-specified number of iterations.
This approach allows us to adaptively precondition the SGLD in the initial sampling phase, enabling faster exploration of the parameter space. 
When $\alpha=1$, the SGLD is statically preconditioned, stabilizing the MCMC chains as they settle into the target density modes.

\subsection{PSGLD to sample from the relaxed physics-informed conditional prior $p(w\vert x_0, \theta)$}
We use PSGLD to sample from the relaxed physics-informed conditional prior
$p(w\vert x_{0}, \theta)$.
The update step is
\begin{align}
\label{psgld_update_prior}
        \Delta w_k
    =
   \rho_k
    \left[
        M(w_k)
            \nabla_{w}
            \log \pi(w_k\vert x_{0}, \theta)
        +\Gamma(w_k)
    \right]
    +
    M(w_k)^{\frac{1}{2}}
    \sqrt{2\rho_k} \xi_{k},
\end{align}
where, recall from Eq.~(\ref{logpriortarget})
\begin{align*}
    \nabla_{w}
            \log \pi(w_k\vert x_0, \theta)
    =
    -T\beta_1
    \E{
    t\sim \mathcal{U}([0, T])
    }{
    \nabla_{w}
    h(w_k, \theta, t)}
    -\beta_2 
    \nabla_{w}
    H_2(\hat{x}(0; w_k), x_0)
    .
\end{align*}
We use an unbiased estimator for it:
\begin{align}
    \widehat{\nabla_{w}
            \log \pi(w_k\vert x_0, \theta)
            }
    =
    \frac{-T\beta_1}{n_t}
    \sum_{i=1}^{n_t}
    \nabla_w h(w_k, \theta, t_i)
    -\beta_2 
    \nabla_w
    H_2(\hat{x}(0; w_k), x_0).
    \label{psgld_grad_log_cond_prior}
\end{align}
The precondition matrix $M(w_k)$ update rule is 
\begin{equation}
\label{RMSprop_prior}
    \begin{split}     
    V(w_k)
    &=
    \alpha V(w_{k-1})
    +
    (1-\alpha)
    g(w_k)\odot g(w_k),
    \\
    M(w_k)
    &=
    \operatorname{diag}
    \left(
        \frac{1}{
        \delta +
        \sqrt{
        V(w_k)
        }
        }
    \right),
    \end{split}
\end{equation}
where
\begin{align*}
    g(w_k)
    =
    \widehat{
    \nabla_{w}
            \log \pi(w_k\vert x_0, \theta)
        }.
\end{align*}

We summarize these steps in Algorithm~\ref{alg:PSGLD_prior}.

\RestyleAlgo{ruled} 
\SetKwInput{KwInit}{Initialization}
\SetKwInput{KwInput}{Input}
\SetKwInput{KwReturn}{Return}
\SetKwInput{Kwalg}{PSGLD\_PRIOR}
\SetKwComment{Comment}{$\triangleright$}{}
\begin{algorithm}[tb]
\caption{
PSGLD to sample from $p\left(w\vert x_0, \theta \right)$.
}\label{alg:PSGLD_prior}
\Kwalg{
(
\begin{tabbing}
\hspace{1em} \= $w_0 $, \hspace{2em} \=
 \# \textit{MCMC chain initial state.}
 \\
\> $(x_0, \theta)$, \> \# \textit{the auxiliary initial state and model parameter.} 
\\
\> niter, \> \# \textit{The number of MCMC iterations.}
\\
\> $n_t$, \> \# \textit{Sample size.} 
\\
\> $\rho$, \> \# \textit{Step size.}
\\
\> $(\boldsymbol{\alpha}, \delta)$, \> \# \textit{RMSprop parameters.}
\\
\> $(V, M)$, \> \# \textit{Initial RMSprop matrices.}
\\
)
\end{tabbing}
}
 \For{$k=0; k<\text{niter}; k=k+1$}{  
 Sample $t_i$ independently from $\mathcal{U}([0,T])$\;
 Compute 
 $
\widehat{
        \nabla_{w}
            \log \pi(w_k\vert x_0, \theta)
    }
$ using \qref{psgld_grad_log_cond_prior}
\;
Choose the current memory size $\alpha$ from $\boldsymbol{\alpha}$.
 Update 
$V(w_k)$ and $M(w_k)$
using \qref{RMSprop_prior}
\;
Compute 
$\Delta w_k$ using \qref{psgld_update_prior} and update $w_{k+1}$.
} 
 \KwReturn{
 $
 \left(w_{-1}, V, M\right)
 $
 \hspace{2em}
 \#
 \textit{The final state of MCMC chain and RMSprop parameters.}
 }
\end{algorithm}

\subsection{NPSGLD to sample from the marginal posterior $p(w, \theta\vert y)$}

The update rule in \qref{npsgld_update} requires calculating the gradient of $\log Z(x_{0, k}, \theta_k)$.
Similar to the NSVI case, the gradient is
\begin{align*}
    \nabla_{\theta, x_0} 
    \log Z( x_{0, k}, \theta_k)
    &=
     \E{
    p(\tilde{w}\vert  x_{0, k}, \theta_k)
    }{
    \nabla_{\theta, x_0}
    \log\left[
    \pi(
    \Tilde{w}\vert x_{0, k}, \theta_k
    )
    \right]
    }.
\end{align*}
So the gradient of log relaxed physics-informed conditional prior is
\begin{align}
\label{appendix:grad_prior_npsgld}
    \nabla_{w, \theta, x_0}
    \log p(w_k\vert  x_{0, k}, \theta_k)
    =
    \nabla_{w, \theta, x_0} 
    \log \pi
    (
    w_k\vert x_{0, k}, \theta_k
    )
    -
    \E{
    p(\tilde{w}\vert  x_{0, k}, \theta_k)
    }{
    \nabla_{\theta, x_0}
    \log\left[
    \pi(
    \Tilde{w}\vert x_{0, k}, \theta_k
    )
    \right]
    }
\end{align}

We apply \qref{logpriortarget} to \qref{appendix:grad_prior_npsgld}, and sample $t_l$ from $p(t)$, $\tilde{w}_m$ from $p(\tilde{w}\vert  x_{0, k}, \theta_k)$, and
$\tilde{t}_{mn}$ from $p(\Tilde{t})$ to get the estimator 
\begin{equation}
\label{psgld_condi_grad}
    \begin{aligned}
    \widehat{\nabla_{w, \theta, x_0} 
    \log p(w_k\vert x_{0, k}, \theta_k)}
    =
    &-
    \frac{T\beta_1}{n_t}
    \sum_{l=1}^{n_t}
        \nabla_{w, \theta}
        h(w_k, \theta_k, t_l)
    -\beta_2
     \nabla_{w, x_0}
     H_2(
    \hat{x}(0; w_k), x_{0,k}
    )
    \\
    &+
    \frac{T\beta_1}{n_{\Tilde{w}}n_{\Tilde{t}}}
    \sum_{m=1}^{n_{\Tilde{w}}}
               \sum_{n=1}^{n_{\Tilde{t}}}
               \left[
               \nabla_{\theta}
               h(\Tilde{w}_{m}, \theta_k, \Tilde{t}_{mn})      
            +
            \beta_2
            \nabla_{x_0}
            H_2(
            \hat{x}(0; \Tilde{w}_m), x_{0,k}
            )
            \right]
    \end{aligned}.
\end{equation}

The precondition matrix is
\begin{equation}
\label{RMSprop2}
    \begin{split}     
    V(w_k, \theta_k, x_{0,k})
    &=
    \alpha V(w_{k-1}, \theta_{k-1}, x_{0,k-1})
    +
    (1-\alpha)
    g(w_{k}, \theta_{k}, x_{0,k})\odot g(w_{k}, \theta_{k}, x_{0,k}),
    \\
    M(w_k, \theta_k, x_{0,k})
    &=
    \operatorname{diag}
    \left(
        \frac{1}{
        \delta +
        \sqrt{
        V(w_k, \theta_k, x_{0,k})
        }
        }
    \right).
    \end{split}
\end{equation}
In the above equation, we define
\begin{align}
\label{npsgld_g}
    g(w_{k}, \theta_{k}, x_{0,k})
    =
    \frac{n_d}{m_d}
    \sum_{i=1}^{m_d}
    \nabla_{w, \theta}
    \log
    p(y_{ki}\vert w_k, \theta_k)
    +
    \widehat{
    \nabla_{w, \theta, x_0}
    \log
    p(w_k\vert x_{0,k}, \theta_k)
    }
    +
    \nabla_{\theta, x_0}
    \log
    p(x_{0, k}, \theta_k).
\end{align}

Similar to NSVI, we have to sample from the relaxed physics-informed conditional prior
$p(\Tilde{w}\vert x_{0,k}, \theta_k)$ to obtain  samples $\Tilde{w}_{m}$.
We use Algorithm~\ref{alg:PSGLD_prior} to sample from this prior by running an auxiliary MCMC chain.
This means that we only consider the case where $n_{\Tilde{w}}=1$.

To improve the computational efficiency, we run NPSGLD in a persistent, short-run, non-convergent manner for the auxiliary chain.
Namely, we initialize the auxiliary chain at the next iteration using the result from the current iteration and update it for only a few steps, e.g., ten steps.
Algorithm~\ref{alg:PSGLD_post} outlines the process.

\RestyleAlgo{ruled} 
\SetKwInput{KwInit}{Initialization}
\SetKwInput{KwInput}{Input}
\SetKwInput{KwReturn}{Return}
\SetKwInput{Kwalg}{NPSGLD\_POSTERIOR}
\SetKwComment{Comment}{$\triangleright$}{}
\begin{algorithm}[tb]
\caption{
NPSGLD to sample from $p\left(w, \theta\vert y \right)$.
}\label{alg:PSGLD_post}
\Kwalg{
(
\begin{tabbing}
\hspace{1em} \= $(w_0, x_{0,0}, \theta_0)$, \hspace{2em} \=
 \# \textit{MCMC chain initial states.}
\\
\> $\Tilde{w}_0$, \> \# \textit{Auxiliary MCMC chain initial state.}
\\
\> (niter, niter\_auxi), \> \# \textit{The number of (auxiliary) MCMC iterations.}
\\
\> $(n_{t}, n_{\Tilde{t}}, n_{\Tilde{w}}, m_y)$ \> \# \textit{Sample sizes.} 
\\
\> $(\rho, \Tilde{\rho})$, \> \# \textit{Step sizes.}
\\
\> $(\boldsymbol{\alpha}, \delta, \boldsymbol{\Tilde{\alpha}}, \Tilde{\delta})$, \> \# \textit{RMSprop parameters.}
\\
\> $(V, M, \Tilde{V}, \Tilde{M})$, \> \# \textit{Initial RMSprop matrices.}
\\
)
\end{tabbing}
}
 \For{$k=0; k<\text{niter}; k=k+1$}{  
 \# Assuming $n_{\Tilde{w}}=1$ for computational efficiency.
 \\
 Choose the current memory size $\Tilde{\alpha}$ from $\boldsymbol{\Tilde{\alpha}}$;
 \\
 $(\Tilde{w}_0, \Tilde{V}, \Tilde{M}) \gets $ \textbf{PSGLD\_PRIOR}
 $
 \left(
    \Tilde{w}_0,
    (x_{0, k}, \theta_k),
    \text{niter\_auxi},
    n_{\Tilde{t}},
    \Tilde{\rho},
    (\Tilde{\alpha}, \Tilde{\delta}),
    (\Tilde{V}, \Tilde{M})
 \right),
 $
 \hspace{1em}
\# Run \textbf{Algorithm} \ref{alg:PSGLD_prior}.
 \\
 Sample $t_{l}$ and $t_{1m}$ independently from $\mathcal{U}([0,T])$\;
 Compute
 $
\widehat{\nabla_{w, \theta, x_0} 
    \log p(w_k\vert \theta_k, x_{0, k})}
$
using Eq.~(\ref{psgld_condi_grad}) with $\Tilde{w}_{m}=\Tilde{w}_0$
 \;
 Subsample a minibatch dataset $y_{i\in I_{m_d}}$ from $y$\;
 Choose the current memory size $\alpha$ from $\boldsymbol{\alpha}$.
Update $V(w_k, \theta_k, x_{0,k})$ and $M(w_k, \theta_k, x_{0,k})$ using Eqs.~(\ref{RMSprop2})
 \;
 Compute $(
        \Delta w_k, 
        \Delta \theta_k,
        \Delta x_{0,k}
    )$
    using \qref{npsgld_update}
    and update $(w_{k+1}, \theta_{k+1}, x_{0,k+1})$.
}
 \KwReturn{
 $
 \left(w_{1:k}, x_{0, 1:k}, \theta_{1:k}\right)
 $
 \hspace{2em}
 \#
 \textit{MCMC samples.}
 }
\end{algorithm}

\section{State filtering}

Algorithms \ref{alg:svi_post} and \ref{alg:PSGLD_post} are developed to handle the complex task of joint state and parameter estimation. However, if only state filtering is required and the model parameters are known, these algorithms remain applicable.

\section{Computational time}
\label{appendix: computational time}
Tables \ref{tab:1}, \ref{tab:2}, \ref{tab:3} and \ref{tab:4} summarize the dimension of $w$ in $\hat{x}(t;w)$ and computational time for all experiments.
Our hardware is an Apple M1 Pro chip with 10 CPU cores.
Overall, we find NSVI is faster than NPSGLD, albeit less accurate than NPSGLD.

\begin{table}[!htb]
\caption{computational time of examples in section~\ref{example 1}.}
\centering
\label{tab:1}
\begin{tabular}{ccccc}
\toprule[0.5mm]
& Repara NSVI & Relaxed NSVI & Relaxed NSGLD & Relaxed NPSGLD
\\
\midrule[0.25mm]
$w$ dimension
& 162& 162& 162&162
\\
Computational time (s)
&8&7& 77&90
\\
\bottomrule[0.5mm]
\end{tabular}
\end{table}

\begin{table}[!htb]
\caption{computational time of examples in section~\ref{section:2dof}.}
\centering
\label{tab:2}
\begin{tabular}{ccccc}
\toprule[0.5mm]
& 
\begin{tabular}[c]{@{}c@{}}w/o residual path: \\ NSVI\end{tabular}
&
\begin{tabular}[c]{@{}c@{}}w/o residual path: \\ NPSGLD\end{tabular}
& \begin{tabular}[c]{@{}c@{}}w/ residual path: \\ NSVI\end{tabular}
& \begin{tabular}[c]{@{}c@{}}w/ residual path: \\ NPSGLD\end{tabular}
\\
\midrule[0.25mm]
$w$ dimension
&80 &80 & 344&344
\\
Computational time (s)
&5&25&26&195
\\
\bottomrule[0.5mm]
\end{tabular}
\end{table}
\begin{table}[!htb]
\caption{computational time of examples in section~\ref{section:high}.}
\centering
\label{tab:3}
\begin{tabular}{ccc}
\toprule[0.5mm]
& NSVI
& NPSGLD
\\
\midrule[0.25mm]
$w$ dimension
&4660 & 4660
\\
Computational time (s)
&196&325
\\
\bottomrule[0.5mm]
\end{tabular}
\end{table}
\begin{table}[!htb]
\caption{computational time of examples in section~\ref{section:nes}.}
\centering
\label{tab:4}
\begin{tabular}{cc}
\toprule[0.5mm]
& NSVI
\\
\midrule[0.25mm]
$w$ dimension
&16000 
\\
Computational time (s)
&827
\\
\bottomrule[0.5mm]
\end{tabular}
\end{table}

\bibliographystyle{unsrtnat}

\end{document}